\documentclass{article}

\usepackage{microtype}
\usepackage{graphicx}
\usepackage{subcaption}
\usepackage{booktabs} 
\usepackage{hyperref}
\usepackage{multirow}
\usepackage{makecell}
\usepackage{booktabs}
\usepackage{xcolor}
\usepackage{colortbl}
\usepackage{booktabs}
\usepackage{multirow}
\usepackage{graphicx}
\usepackage{array}
\definecolor{darkgreen}{RGB}{0, 100, 0}
\definecolor{darkred}{RGB}{139, 0, 0}
\usepackage{xurl}

\newcommand{\yc}{\cellcolor{yellow!15}}
\newcommand{\gc}{\cellcolor{gray!15}}
\usepackage[preprint]{icml2026}
\usepackage{amsmath}
\usepackage{amssymb}
\usepackage{mathtools}
\usepackage{amsthm}

\usepackage[capitalize,noabbrev]{cleveref}

\theoremstyle{plain}
\newtheorem{theorem}{Theorem}[section]

\newtheorem{lemma}[theorem]{Lemma}
\newtheorem{corollary}[theorem]{Corollary}
\theoremstyle{definition}
\newtheorem{definition}[theorem]{Definition}

\theoremstyle{remark}

\icmltitlerunning{OSNIP: Breaking the Privacy-Utility-Efficiency Trilemma in LLM Inference via Obfuscated Semantic Null Space}
\begin{document}
\pagestyle{empty}

\twocolumn[
\icmltitle{OSNIP: Breaking the Privacy-Utility-Efficiency Trilemma in LLM Inference via Obfuscated Semantic Null Space}

\begin{icmlauthorlist}
\icmlauthor{Zhiyuan Cao}{SSC}
\icmlauthor{Zeyu Ma}{SSC,SNU}
\icmlauthor{Chenhao Yang}{SSC,SSPU}
\icmlauthor{Han Zheng}{TAI}
\icmlauthor{Mingang Chen}{SSC}

\end{icmlauthorlist}
\icmlaffiliation{SSC}{Shanghai Key Laboratory of Computer Software Testing and Evaluating}
\icmlaffiliation{SNU}{Shanghai Normal University}
\icmlaffiliation{SSPU}{Shanghai Polytechnic University}
\icmlaffiliation{TAI}{TrustAI Pte. Ltd.}

\icmlcorrespondingauthor{Mingang Chen}{cmg@sscenter.sh.cn}
\icmlkeywords{Machine Learning, Privacy, Large Language Models, High-Dimensional Geometry}

\vskip 0.3in
]
\printAffiliationsAndNotice{}

\begin{abstract}
We propose Obfuscated Semantic Null space Injection for Privacy (OSNIP), a lightweight client-side encryption framework for privacy-preserving LLM inference.
Generalizing the geometric intuition of linear kernels to the high-dimensional latent space of LLMs, we formally define the ``Obfuscated Semantic Null Space'', a high-dimensional regime that preserves semantic fidelity while enforcing near-orthogonality to the original embedding.
By injecting perturbations that project the original embedding into this space, OSNIP ensures privacy without any post-processing.
Furthermore, OSNIP employs a key-dependent stochastic mapping that synthesizes individualized perturbation trajectories unique to each user.
Evaluations on 12 generative and classification benchmarks show that OSNIP achieves state-of-the-art performance, sharply reducing attack success rates while maintaining strong model utility under strict security constraints.
\end{abstract}

\section{Introduction}
\label{submission}
The widespread adoption of Model-as-a-Service (MaaS) has democratized access to Large Language Models (LLMs)~\cite{NIPS2017_3f5ee243}, allowing users to leverage powerful inference capabilities~\cite{Bran2023AugmentingLL, Wu2023BloombergGPTAL, Singhal2023LargeLM} via cloud APIs.
However, the centralized design of high-performance closed-source models requires sending user data to third-party servers, creating major privacy risks~\cite{9152761}.

To mitigate these concerns, recent research\footnote{More related work see Appendix \ref{app: related_work}.} has explored various privacy-preserving inference strategies, primarily utilizing Homomorphic Encryption (HE)~\cite{castro2025encryptedllm, NEURIPS2022_64e2449d}, Differential Privacy (DP)~\cite{wu2025cape, chen-etal-2023-customized, yue-etal-2021-differential, zhang-etal-2025-dyntext}, and Multi-Party Computation (MPC)~\cite{NEURIPS2024_264a9b3c, DBLP:journals/iacr/Hou0LLLH023}. 
These methods aim to protect user privacy while retaining model utility, following distinct paradigms: DP sanitizes inputs via perturbation, HE executes inference directly on encrypted ciphertexts, and MPC distributes computation shares across non-colluding parties.

Existing methods inherently view the high-dimensionality of LLM embeddings as a ``curse'', representing a landscape where computational overhead scales exponentially and standard cryptographic primitives become inapplicable. 
Consequently, these methods attempt to fight against dimensionality by imposing extrinsic constraints or altering the inference pipeline. 
Specifically, MPC requires architectural partitioning, HE requires invasive structural modifications, and text-substitution methods (like DP-based rewriting) often sacrifice semantic precision. 
These inevitably lead to a rigid trilemma: \textit{a trade-off between inference efficiency, model utility, and privacy guarantees}~\cite{11344353}.


However, we argue that the high dimensionality of LLMs is not a ``curse'', but a ``blessing'' for privacy. 
Motivated by recent works leveraging the null space for knowledge editing~\cite{fang2025alphaedit} and safety enhancement~\cite{sheng2025alphasteerlearningrefusalsteering}, we demonstrate that the over-parameterization of LLMs induces an ``\textbf{Obfuscated Semantic Null Space}'', a high-dimensional region that preserves the output distribution while enforcing geometric decorrelation from the original embedding.
Elements in this space are nearly orthogonal to the original embeddings while preserving semantic information. 
This ensures privacy and prevents inference interference without requiring denoising or reconstruction.
By injecting perturbations that project the original embeddings into their obfuscated semantic null space, we effectively bypass aforementioned trade-offs to optimize efficiency, utility, and privacy in a single framework.

In this paper, we introduce \underline{O}bfuscated \underline{S}emantic \underline{N}ull space \underline{I}njection for \underline{P}rivacy (\textbf{OSNIP}), a lightweight client-side encryption framework.
OSNIP uses geometric orthogonality as a primary privacy mechanism, employing a hinge-based constraint to minimize directional correlations. 
This procedure projects perturbations by constraining them to remain approximately orthogonal to the underlying semantic invariant region.
Benefiting from the high-dimensional capacity of the obfuscated semantic null space, OSNIP further incorporates a key-dependent stochastic mapping that tailors perturbations to individual users. 
This mechanism ensures that the generated perturbation is uniquely conditioned on user-specific keys, enabling a highly individualized privacy-preserving scheme.

Our main contributions are as follows:

\begin{itemize}
\item \textbf{Obfuscated Semantic Null Space:} We formally define an ``obfuscated semantic null space'', representing a high-dimensional region that preserves the model’s semantic distribution while remaining nearly orthogonal to the original embedding. This enables a novel denoising-free privacy protection paradigm, where perturbations can be injected without necessitating any post-processing.

\item \textbf{OSNIP Framework:} We propose OSNIP, a client-side encryption framework that projects original embeddings into their obfuscated semantic null space via perturbation injection with negligible computational overhead. Specifically, using key-dependent stochastic mapping, OSNIP can generate individualized perturbations tailored to diverse users.

\item \textbf{Superior Utility-Privacy Trade-off:} We conduct comprehensive evaluations across both generative and discriminative tasks on 12 standard benchmarks, demonstrating that OSNIP maintains exceptional model utility even under stringent security constraints, effectively suppressing the Attack Success Rate (ASR) while preserving the original model’s functional integrity. OSNIP establishes state-of-the-art (SOTA) performance in most evaluated scenarios.
\end{itemize}
\section{Problem Setup \& Theoretical Analysis}
\subsection{Problem Setup}
\label{sec: problem setup}
\textbf{System Setting.} We consider a Model-as-a-Service (MaaS) scenario involving three parties: a client $\mathcal{C}$, a cloud server $\mathcal{S}$ and a trusted third party $\mathcal{T}$. 
The client $\mathcal{C}$ holds a private prompt $x$ and secret key $k$, and seeks inference from an LLM hosted by $\mathcal{S}$ without revealing $x$ in plaintext.
The server $\mathcal{S}$ hosts a pre-trained LLM $\mathcal{M}_\theta$ parameterized by $\theta$ and provides inference APIs.
The trusted third party $\mathcal{T}$ retrieves gradient of LLM from server $\mathcal{S}$ to build an encryptor, subsequently deploying it to client $\mathcal{C}$.

Let $\mathcal{M}_\theta$ with a vocabulary $\mathcal{V}$ and embedding dimension $d$. Let $\mathcal{X}$, $\mathcal{Z} = \mathbb{R}^d$ and $\mathcal{Y}=\Delta^{|\mathcal{V}|}$ denote the input space, embedding space and output space, respectively. 
We decompose $\mathcal{M}_\theta$ into an embedding function $\text{g}: \mathcal{X} \to \mathcal{Z}$ and a downstream predictor $\text{f}_\theta: \mathcal{Z} \to \mathcal{Y}$. For an input $x$, its embedding representation is given by $\mathbf{h} = \text{g}(x)$.


Our goal is to design a perturbation mechanism $\mathcal{R}$ such that the perturbed embedding $\mathbf{z} = \mathcal{R}(h, k)$ maintains high utility for $\mathcal{M}_\theta$ while statistically hiding $x$ from $\mathcal{S}$.

\textbf{Adversary Setting.} We assume the server $\mathcal{S}$ as a semi-honest adversary. The adversary strictly follows the inference protocol but logs all received embeddings to attempt to reconstruct the user's private input.

The server $\mathcal{S}$ possesses white-box access to the model parameters $\theta$ (including the vocabulary projection matrix) and is assumed to possess unbounded computational resources. 
Equipped with these capabilities, $\mathcal{S}$ can orchestrate geometry-dependent attacks, specifically K-Nearest Neighbors~\cite{10.1145/3372297.3417270} (KNN) retrieval and vocabulary-matching attack~\cite{pal2025hidden} via hidden states.
However, $\mathcal{S}$ does not have access to the client's secret key $k$ and cannot tamper with the communication channel (assuming secure transmission).

\subsection{Theoretical Analysis}
\label{sec: 2.2 theoretical_analysis}
In the context of textual privacy preservation for LLMs, we first introduce the definition of $d_{\mathcal{X}}$-privacy, following the framework established by Chatzikokolakis et al.~\cite{10.1007/978-3-642-39077-7_5}.

Let $d_{\mathcal{X}}$ be the distance metric of $\mathcal{X}$. 
Let $\mathcal{P}(\mathcal{Y})$ denote the set of probability measures over the output space $\mathcal{Y}$.
We formally define $d_{\mathcal{X}}$-privacy as follows:

\begin{definition}[$d_{\mathcal{X}}$-Privacy]
A mechanism $\text{K}: \mathcal{X} \rightarrow \mathcal{P}(\mathcal{Y})$ satisfies $d_{\mathcal{X}}$-privacy iff for all $x, x' \in \mathcal{X}$:
\begin{equation}
    d_{\mathcal{P}}(\text{K}(x), \text{K}(x')) \leq d_{\mathcal{X}}(x, x'), 
\end{equation}
where $d_{\mathcal{P}}(\cdot, \cdot)$ is the multiplicative distance between two distributions.
\label{def: dx-privacy}
\end{definition}

Definition \ref{def: dx-privacy} extends privacy guarantees to the continuous geometry of LLM embeddings.
To identify a regime where we can maximize input distortion (privacy) while minimizing output divergence (utility), consider the trivial case where the mechanism $\text{K}$ is a linear transformation.
In this setting, the condition is perfectly satisfied when the perturbation is chosen from the null directions that are also orthogonal to the input, i.e. $\ker(\mathrm{K})\cap x^\perp$, so the output distance $d_{\mathcal{P}}$ stays zero while the perturbed input becomes geometrically decorrelated from $x$.

Departing from idealized linear systems, LLMs implement highly non-linear mappings, so a global null space is no longer available in closed form.
Instead, we appeal to the manifold hypothesis~\cite{10.1109/TPAMI.2013.50}: embeddings of natural language inputs concentrate near a low-dimensional semantic manifold.
Around a reference embedding $\mathbf{h}\in\mathcal{M}$, the predictor $\mathrm{f}_\theta$ typically exhibits many near-invariant directions, yielding a local neighborhood in which the predictive distribution changes only marginally.
Crucially, such near-invariance is input-conditioned---it depends on the local geometry around $\mathbf{h}$ rather than defining a global subspace.
We formalize this space as the following semantic null space.


\begin{definition}[Semantic Null Space]
\label{def: semantic_invariance_region}
Given an LLM predictor $\mathrm{f}$, a reference embedding $\mathbf{h}$ and tolerance $\delta>0$, the \emph{semantic null space} is defined as
\begin{equation}
\mathcal{S}^{\mathrm{f}}_{\delta}(\mathbf{h})
\;=\;
\Big\{\mathbf{z}\in\mathbb{R}^d \;\big|\;
d_{\mathcal{P}}\!\big(\mathrm{f}_\theta(\mathbf{h}), \mathrm{f}_\theta(\mathbf{z})\big)
\le \delta
\Big\}.
\end{equation}

Elements of $\mathcal{S}^{\mathrm{f}}_{\delta}(\mathbf{h})$ induce predictive distributions that are $\delta$-close to that of $\mathbf{h}$ with respect to the distance measure $d_{\mathcal{P}}$ for an LLM predictor $\mathrm{f}_\theta$.
\end{definition}

While $\mathcal{S}^{\mathrm{f}}_{\delta}(\mathbf{h})$ characterizes the constraints required for utility, it does not inherently guarantee privacy. 
To defend against geometry-dependent adversaries, the perturbed embedding must be geometrically distant from the original input.
Complementing the semantic constraint, we define a \emph{geometric obfuscation region} that enforces low directional similarity to the reference embedding, thereby suppressing cosine/KNN-style matching signals exploited by geometry-based inversion attacks.

\begin{definition}[Geometric Obfuscation Region]
\label{def: orthogonal_region}
For a reference embedding $\mathbf{h}$ and an orthogonality margin $\epsilon \ge 0$, the \emph{geometric obfuscation region} $\mathcal{O}_{\epsilon}(\mathbf{h}) \subset \mathbb{R}^d$ is defined as:
\begin{equation}
    \mathcal{O}_{\epsilon}(\mathbf{h}) = \Big\{ \mathbf{z} \in \mathcal{Z} \big| |\cos(\mathbf{h}, \mathbf{z})| \le \epsilon \Big\}.
\end{equation}
\end{definition}

Geometrically, this set constitutes a hyperspherical band (see Figure \ref{fig:OSNIP_arch}) centered at the equator relative to the pole $\mathbf{h}$. 
Vectors within this locus are statistically orthogonal to the input, rendering distance-based inversion attacks ineffective.

Recall our initial motivation from the linear case: we seek a subspace where perturbed inputs are significant (high $d_{\mathcal{X}}$) yet the output remains invariant (zero or low $d_{\mathcal{P}}$).
Clearly, provided that the intersection between the safety region $S$ and the objective set $O$ is non-empty, every element $\delta \in S \cap O$ serves as a qualified perturbed embedding.
Specifically, vectors within this intersection possess a dual nature: they are geometrically orthogonal to the input yet semantically equivalent to the LLM. 
Crucially, this implies that the injected noise \textit{does not need to be removed}; it is inherently ``ignored'' by the LLM's semantic projection.
We define this intersection as the Obfuscated Semantic Null Space.

\begin{definition}[Obfuscated Semantic Null Space]
\label{def: semantic_null_space}
Given an LLM predictor $\mathrm f(\cdot)$, a reference embedding $\mathbf{h}$, a utility tolerance $\delta$, and an orthogonality margin $\epsilon$, the obfuscated semantic null space $\mathcal{N}_{\delta, \epsilon}(\mathbf{h})$ is defined as the intersection of the semantic invariance region and the $\epsilon$-orthogonal region:
\begin{equation}
\mathcal{N}^{\mathrm f}_{\delta, \epsilon}(\mathbf{h}) := \mathcal{S}^{\mathrm f}_{\delta}(\mathbf{h}) \cap \mathcal{O}_{\epsilon}(\mathbf{h}).
\end{equation}
\end{definition}
For notational simplicity, we denote it by $\mathcal{N}_{\delta,\epsilon}(\mathbf{h})$.

As long as $\mathcal{N} \ne \varnothing$, it allows for the injection of perturbations that are functionally invariant to the LLM's predictive logic, thereby achieving privacy preservation without the need for subsequent reconstruction. 

\textbf{Theoretical Results.}
To avoid ambiguities of unbounded Lebesgue volume, we measure volumes on the sphere of radius $r:=\|\mathbf{h}\|_2$:
\begin{equation}
\mathbb{S}^{d-1}_r := \{\mathbf{z}\in\mathbb{R}^d:\|\mathbf{z}\|_2=r\}.
\end{equation}
Let $\mu_{d,r}$ denote the uniform probability measure on $\mathbb{S}^{d-1}_r$.

We define the set of \emph{semantic-preserving directions} as
\begin{equation}
\Omega_{\delta}(\mathbf{h})
~:=~
\Big\{ \mathbf{u}\in \mathbb{S}^{d-1}_1 \;\big|\;
d_{\mathcal{P}}\!\big(f_\theta(\mathbf{h}), f_\theta(r\mathbf{u})\big) \le \delta \Big\}.
\label{eq:semantic_slice}
\end{equation}
Accordingly, we define the \emph{semantic coverage rate}
\begin{equation}
\alpha_\delta(\mathbf{h})
~:=~
\sigma_{d-1}\!\big(\Omega_\delta(\mathbf{h})\big)\in[0,1].
\label{eq:alpha_def}
\end{equation}
Intuitively, $\alpha_\delta(\mathbf{h})$ measures the fraction of iso-norm directions that preserve
the predictive distribution within tolerance $\delta$.

We next formalize the geometric obfuscation constraint on the same compact geometry.
Let $\widehat{\mathbf{h}}:=\mathbf{h}/\|\mathbf{h}\|_2$ be the normalized reference vector. 
We consider the restriction of the $\epsilon$-orthogonality constraint to the unit sphere as
\begin{equation}
\mathcal{B}_{\epsilon}(\mathbf{h})
\;:=\;
\Big\{\mathbf{u}\in\mathbb{S}^{d-1}_1 \;\big|\; |\langle \mathbf{u},\widehat{\mathbf{h}}\rangle|\le \epsilon \Big\}.
\label{eq:orth_band_sphere}
\end{equation}
Consequently, the \emph{directional obfuscated semantic null space} is defined as the intersection of the semantic-preserving directions and the spherical $\epsilon$-orthogonal band:
\begin{equation}
\begin{aligned}
\mathcal{N}^{\mathrm{dir}}_{\delta,\epsilon}(\mathbf{h}) 
&\;:=\; \Omega_\delta(\mathbf{h}) \cap \mathcal{B}_\epsilon(\mathbf{h}), \\
\mathcal{N}_{\delta,\epsilon}(\mathbf{h}) 
&\;=\; \{r\mathbf{u} : \mathbf{u}\in\mathcal{N}^{\mathrm{dir}}_{\delta,\epsilon}(\mathbf{h})\}.
\label{eq:ndir_def}
\end{aligned}
\end{equation}

\begin{theorem}[Existence of Semantic Null Space]
\label{thm:existence}
Suppose the semantic coverage rate satisfies the condition:
\begin{equation}
\alpha_\delta(\mathbf{h})
\;>\;
2\exp\!\Big(-\frac{(d-2)\epsilon^2}{2}\Big).
\label{eq:exist_cond}
\end{equation}
Then $\mathcal{N}^{\mathrm{dir}}_{\delta,\epsilon}(\mathbf{h})\neq \varnothing$, hence
$\mathcal{N}_{\delta,\epsilon}(\mathbf{h})\neq \varnothing$.
Moreover,
\begin{equation}
\sigma\!\big(\mathcal{N}^{\mathrm{dir}}_{\delta,\epsilon}(\mathbf{h})\big)
\;\ge\;
\alpha_\delta(\mathbf{h})
- 2\exp\!\Big(-\frac{(d-2)\epsilon^2}{2}\Big)
\;>\; 0.
\label{eq:exist_measure_lb}
\end{equation}
\end{theorem}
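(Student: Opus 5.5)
The argument reduces to a measure inequality on the unit sphere. Throughout, $\sigma=\sigma_{d-1}$ is the uniform probability measure on $\mathbb{S}^{d-1}_1$, the same measure used to define $\alpha_\delta(\mathbf{h})$. Since $\mathcal{N}^{\mathrm{dir}}_{\delta,\epsilon}(\mathbf{h})=\Omega_\delta(\mathbf{h})\cap\mathcal{B}_\epsilon(\mathbf{h})$, the union bound (inclusion--exclusion) gives
\begin{equation*}
\sigma\big(\Omega_\delta\cap\mathcal{B}_\epsilon\big)\;\ge\;\sigma(\Omega_\delta)-\sigma\big(\mathbb{S}^{d-1}_1\setminus\mathcal{B}_\epsilon\big)\;=\;\alpha_\delta(\mathbf{h})-\sigma\big(\{\mathbf{u}:|\langle\mathbf{u},\widehat{\mathbf{h}}\rangle|>\epsilon\}\big).
\end{equation*}
This uses no property of $\mathrm{f}_\theta$ beyond measurability of $\Omega_\delta(\mathbf{h})$. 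Measurability holds, for instance, if $\mathbf{z}\mapsto d_{\mathcal{P}}(\mathrm{f}_\theta(\mathbf{h}),\mathrm{f}_\theta(\mathbf{z}))$ is continuous, which is implicitly assumed when $\alpha_\delta$ is defined. The whole proof therefore comes down to a concentration-of-measure estimate for the two polar caps around $\pm\widehat{\mathbf{h}}$.

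The key step is the cap bound: for a fixed unit vector $\widehat{\mathbf{h}}$ and $u\sim\sigma$,
\begin{equation*}
\Pr\big(|\langle u,\widehat{\mathbf{h}}\rangle|>\epsilon\big)\le 2\exp\!\Big(-\tfrac{(d-2)\epsilon^2}{2}\Big).
\end{equation*}
By symmetry it suffices to bound one cap, $\Pr(t>\epsilon)\le\exp(-(d-2)\epsilon^2/2)$, where $t=\langle u,\widehat{\mathbf{h}}\rangle$. The first coordinate $t$ has density proportional to $(1-t^2)^{(d-3)/2}$ on $[-1,1]$. Using $1-t^2\le e^{-t^2}$, the numerator of the cap probability is at most $\int_\epsilon^1 e^{-(d-3)t^2/2}\,dt$. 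The normalizing constant $\int_{-1}^1(1-t^2)^{(d-3)/2}dt$ is then bounded below by a Gaussian-type quantity of order $\sqrt{2\pi/(d-2)}$.

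Matching the exact exponent $(d-2)$ against the $(d-3)$ that appears naturally is the fiddly part, and I expect it to be the main obstacle. My first attempt will be the classical argument (e.g.\ Ball's cap lemma): write the cap measure as a ratio of cone volumes and compare it with a ball of radius $\sqrt{1-\epsilon^2}$. This gives $\sigma(\text{cap})\le(1-\epsilon^2)^{(d-1)/2}\cdot(\text{small factor})$, or a bound of the form $\exp(-d\epsilon^2/2)$ for $\epsilon$ not too small. Either form implies the claimed $\exp(-(d-2)\epsilon^2/2)$, because $(1-\epsilon^2)^{(d-2)/2}\le e^{-(d-2)\epsilon^2/2}$.

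If that route leaves a prefactor exceeding $1$, I will fall back on the Gaussian representation $u=g/\|g\|$ and split the event into two pieces: $\{|g_1|>\epsilon s\}$ and $\{\|g\|<s\}$, with $s$ chosen near $\sqrt{d-2}$. This loses a constant in the exponent, so I will prefer the direct density computation. In that computation, the substitution $t=\epsilon+\cdot$ together with the monotonicity of $(1-t^2)^{(d-3)/2}$ gives the ratio bound. For small $d$ or $\epsilon\ge1$ the statement is trivial: if $\epsilon\ge 1$ the cap is empty, and when the right-hand side exceeds $1$ there is nothing to prove.

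Once the cap bound is in hand, substituting it into the union-bound display yields the measure lower bound claimed in the statement. Hypothesis (the stated condition on $\alpha_\delta$) makes that lower bound strictly positive, so $\mathcal{N}^{\mathrm{dir}}_{\delta,\epsilon}(\mathbf{h})$ has positive measure and is in particular nonempty. Finally, scaling by $r=\|\mathbf{h}\|_2>0$ maps it into $\mathcal{N}_{\delta,\epsilon}(\mathbf{h})$. The scaled points do lie in that set: for $\mathbf{z}=r\mathbf{u}$ we have $\cos(\mathbf{h},\mathbf{z})=\langle\mathbf{u},\widehat{\mathbf{h}}\rangle$, so $\mathbf{z}\in\mathcal{O}_\epsilon(\mathbf{h})$, and by definition of $\Omega_\delta$ also $\mathbf{z}\in\mathcal{S}^{\mathrm f}_\delta(\mathbf{h})$.
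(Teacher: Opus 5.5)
Your skeleton matches the paper's proof. You write $\sigma(\Omega_\delta\cap\mathcal{B}_\epsilon)=\alpha_\delta-\sigma(\Omega_\delta\cap\mathcal{B}_\epsilon^c)\ge\alpha_\delta-\sigma(\mathcal{B}_\epsilon^c)$, bound the two caps, and rescale by $r$. Your checks that $r\mathbf{u}\in\mathcal{S}^{\mathrm f}_\delta(\mathbf{h})\cap\mathcal{O}_\epsilon(\mathbf{h})$, that $r>0$, and that $\Omega_\delta$ is measurable are points the paper skips.

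The gap is the cap estimate itself. It is the entire content of the paper's appendix lemma, and you leave it as a list of options without carrying any of them out. The route you say you prefer does not reach the stated constant:
\begin{itemize}
\item With $1-t^2\le e^{-t^2}$, the numerator is at most $\tfrac12\sqrt{2\pi/(d-3)}\,e^{-(d-3)\epsilon^2/2}$.
\item The normalizer is $\sqrt{\pi}\,\Gamma(\tfrac{d-1}{2})/\Gamma(\tfrac d2)$, which lies strictly between $\sqrt{2\pi/d}$ and $\sqrt{2\pi/(d-2)}$. So it cannot be bounded below by $\sqrt{2\pi/(d-2)}$, and the best per-cap bound is $\tfrac12\sqrt{d/(d-3)}\,e^{-(d-3)\epsilon^2/2}$.
\item This is below $e^{-(d-2)\epsilon^2/2}$ for all $\epsilon<1$ only when $d\ge 10$. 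For $4\le d\le 9$ and $\epsilon$ near $1$, the target is far from trivial and is missed, and your trivial-case remark does not cover this.
\end{itemize}
The substitution $t=\epsilon+s$ combined only with monotonicity gives just $\Pr(t>\epsilon)\le\tfrac12$. The Gaussian split on $\{\|g\|<s\}$ loses a constant in the exponent, as you say yourself. So this is a gap in execution rather than in ideas.

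Any of three repairs closes it.
\begin{itemize}
\item[(i)] Ball's cap lemma, correctly stated, has no prefactor and no restriction on $\epsilon$: $\sigma(\{u_1\ge\epsilon\})\le e^{-d\epsilon^2/2}$ for every $\epsilon\in[0,1)$. The two caps therefore have mass at most $2e^{-d\epsilon^2/2}\le 2e^{-(d-2)\epsilon^2/2}$.
\item[(ii)] In your density computation, replace $1-t^2\le e^{-t^2}$ by $1-(\epsilon+s)^2\le(1-\epsilon^2)(1-s^2)$ for $0\le s\le 1-\epsilon$; the difference is $2\epsilon s+\epsilon^2 s^2\ge 0$. Integrating gives $\Pr(|t|>\epsilon)\le(1-\epsilon^2)^{(d-3)/2}\le e^{\epsilon^2/2}e^{-(d-2)\epsilon^2/2}<2e^{-(d-2)\epsilon^2/2}$.
\item[(iii)] The paper keeps $u=g/\|g\|$ but never truncates $\|g\|$. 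It rewrites $|u_1|>\epsilon$ as $g_1^2>aY$, with $a=\epsilon^2/(1-\epsilon^2)$ and $Y=\sum_{i\ge 2}g_i^2\sim\chi^2_{d-1}$ independent of $g_1$. It then conditions on $Y$, applies $\Pr(|g_1|>t)\le 2e^{-t^2/2}$, and computes $\mathbb{E}[e^{-aY/2}]=(1+a)^{-(d-1)/2}=(1-\epsilon^2)^{(d-1)/2}$ exactly. This yields $2e^{-(d-1)\epsilon^2/2}$ with nothing lost.
\end{itemize}
The independence of $g_1$ and $Y$ is what lets the Gaussian route avoid the constant loss you anticipated. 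Route (iii) is self-contained and exact, (i) is shortest if you are willing to cite, and (ii) is the minimal repair of what you wrote.
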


Theorem~\ref{thm:existence} formalizes our core geometric claim: high dimensionality makes semantic null space ubiquitous.
Specifically, the complement of the $\epsilon$-orthogonal band occupies only
$2\exp\!\big(-\tfrac{(d-2)\epsilon^2}{2}\big)$ spherical mass, which vanishes exponentially fast as $d$ grows.
Therefore, as long as the semantic-preserving directions have non-negligible coverage $\alpha_\delta(\mathbf h)$, their intersection with the orthogonal band is guaranteed to be non-empty.

\begin{corollary}[Asymptotic Semantic Dominance]
\label{cor: asymptotic}
For any $\delta>0$ and $\epsilon\in(0,1)$, we have
\begin{equation}
0
\;\le\;
\alpha_\delta(\mathbf h)-\sigma_{d-1}\!\big(\mathcal N^{\mathrm{dir}}_{\delta,\epsilon}(\mathbf h)\big)
\;\le\;
2\exp\!\Big(-\frac{(d-2)\epsilon^2}{2}\Big).
\label{eq:gap_bound}
\end{equation}
In particular, the gap vanishes exponentially fast in $d$, i.e.,
$\alpha_\delta(\mathbf h)-\sigma_{d-1}\!\big(\mathcal N^{\mathrm{dir}}_{\delta,\epsilon}(\mathbf h)\big)\to 0$ as $d\to\infty$.
\end{corollary}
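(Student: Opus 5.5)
The plan is to derive Corollary~\ref{cor: asymptotic} directly from the set identity behind Theorem~\ref{thm:existence}, using the normalized surface measure $\sigma_{d-1}$ on $\mathbb{S}^{d-1}_1$ throughout. The lower bound is immediate from monotonicity. Since $\mathcal N^{\mathrm{dir}}_{\delta,\epsilon}(\mathbf h)=\Omega_\delta(\mathbf h)\cap\mathcal B_\epsilon(\mathbf h)\subseteq\Omega_\delta(\mathbf h)$, we have $\sigma_{d-1}(\mathcal N^{\mathrm{dir}}_{\delta,\epsilon}(\mathbf h))\le\alpha_\delta(\mathbf h)$, so the gap is nonnegative. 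We should first check that $\Omega_\delta(\mathbf h)$ is measurable, which holds, for instance, when $f_\theta$ and $d_{\mathcal P}$ are continuous, since the set is then closed. This is implicitly assumed in the definition of $\alpha_\delta$.

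For the upper bound, decompose $\Omega_\delta=(\Omega_\delta\cap\mathcal B_\epsilon)\sqcup(\Omega_\delta\setminus\mathcal B_\epsilon)$. This gives
\[
\alpha_\delta(\mathbf h)-\sigma_{d-1}\big(\mathcal N^{\mathrm{dir}}_{\delta,\epsilon}(\mathbf h)\big)=\sigma_{d-1}\big(\Omega_\delta(\mathbf h)\setminus\mathcal B_\epsilon(\mathbf h)\big)\le\sigma_{d-1}\big(\mathbb S^{d-1}_1\setminus\mathcal B_\epsilon(\mathbf h)\big).
\]
The right-hand side is $\Pr_{\mathbf u\sim\sigma_{d-1}}(|\langle\mathbf u,\widehat{\mathbf h}\rangle|>\epsilon)$. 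This is exactly the cap-mass estimate already used in Theorem~\ref{thm:existence} (whose measure bound \eqref{eq:exist_measure_lb} is literally the same inequality rearranged), so we can invoke it: the probability is at most $2\exp(-(d-2)\epsilon^2/2)$.

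If a self-contained justification of the cap bound is wanted, I would argue as follows. By rotational invariance, take $\widehat{\mathbf h}=e_1$. The first coordinate $t=u_1$ then has density proportional to $(1-t^2)^{(d-3)/2}$ on $[-1,1]$. Using $(1-t^2)^{(d-3)/2}\le e^{-(d-3)t^2/2}$, the tail integral can be compared against the normalizing constant. The alternative is the standard spherical-cap bound $\sigma(\{u_1>\epsilon\})\le e^{-d\epsilon^2/2}$, and since $d\ge d-2$ this bound is stronger. Adding the two symmetric caps gives the factor $2$. This constant bookkeeping is the only delicate step, and it is the one I expect to be the main obstacle, since the exponent $(d-2)$ must come out cleanly. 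The safest route is to cite the classical cap estimate with exponent $d\epsilon^2/2$ (or $(d-2)\epsilon^2/2$ via the density-ratio argument) and weaken it as needed.

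Finally, for fixed $\epsilon\in(0,1)$ the bound $2\exp(-(d-2)\epsilon^2/2)\to0$ as $d\to\infty$. Combined with the nonnegativity, the squeeze theorem gives that the gap vanishes exponentially fast in $d$, which completes the corollary.
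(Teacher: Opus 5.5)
Your proof is correct and follows the paper's: both write the gap as $\sigma_{d-1}\big(\Omega_\delta(\mathbf h)\cap\mathcal B_\epsilon(\mathbf h)^c\big)$, which is nonnegative and at most $\sigma_{d-1}\big(\mathcal B_\epsilon(\mathbf h)^c\big)\le 2\exp\!\big(-(d-2)\epsilon^2/2\big)$ by the band-complement lemma, and then let $d\to\infty$. The only difference is how the cap bound is justified: the paper uses a Gaussian representation and the chi-square MGF to get $2(1-\epsilon^2)^{(d-1)/2}$, while you cite the classical per-cap estimate $e^{-d\epsilon^2/2}$, which is equally valid and slightly stronger.
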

Corollary \ref{cor: asymptotic} shows that, as $d$ grows, the orthogonality constraint becomes asymptotically non-binding: the semantic null space occupies nearly the same spherical mass as the semantic-preserving set, with a gap upper bounded by $2\exp\!\big(-\tfrac{(d-2)\epsilon^2}{2}\big)$.

\section{Obfuscated Semantic Null Space Injection for Privacy (OSNIP)}
\label{sec:method}

\subsection{Overview}
Section~\ref{sec: 2.2 theoretical_analysis} establishes that, under mild conditions, the semantic null space
$\mathcal{N}_{\delta,\epsilon}(\mathbf{h})$ is non-empty and typically of non-negligible measure in high dimensions.
We now move from theory to practice.

\textbf{OSNIP} (\underline{O}bfuscated \underline{S}emantic \underline{N}ull space \underline{I}njection for \underline{P}rivacy) is a lightweight client-side encryption framework that transforms a clean embedding $\mathbf{h}$ into an
\emph{encrypted embedding} $\mathbf{z}$, such that (i) the cloud-hosted LLM exhibits nearly identical predictive behavior, and
(ii) the resulting representation is geometrically uninformative for inversion attacks that rely on embedding proximity.
Crucially, OSNIP enforces invariance \emph{at the source}: the perturbation is injected along directions that the model is intrinsically insensitive to, eliminating the need for any denoising, reconstruction, or server-side modification.

Figure~\ref{fig:OSNIP_arch} illustrates the end-to-end workflow.
First, $\mathcal{T}$ trains an encryption network using gradients from the server-side LLM and deploys it to the client $\mathcal{C}$.
At inference time, the client $\mathcal{C}$ computes the clean embedding $\mathbf{h}$ and feeds $(\mathbf{h},k)$ into the encryption network, which outputs an encrypted embedding $\mathbf{z}=\mathcal{R}(\mathbf{h},k)$.
The client then sends $\mathbf{z}$ to the cloud server, which performs standard inference without any architectural partitioning or cryptographic protocol.

A key feature of OSNIP is personalized perturbation.
Conditioning on a user-specific secret key $k$ makes the mapping from $\mathbf{h}$ to $\mathbf{z}$ inherently non-deterministic: the same prompt can yield different encrypted embeddings under different keys.
This key conditioning makes the encrypted embeddings less linkable across users and sessions, and limits an attacker’s ability to train a single universal inverter from logged embeddings.

In the following subsections, we detail the OSNIP framework.
Section~\ref{sec: semantic_null_space_encryption_network} introduces a lightweight encryption network and its training objective, which injects perturbations to project the original embeddings into their obfuscated semantic null space.
Section~\ref{subsec:key} then presents key-conditioned personalization, where an added regularizer promotes key-specific diversity.
Finally, Section~\ref{sec:optimization} outlines a dynamic training strategy for stabilizing this multi-objective optimization, ensuring that utility preservation, orthogonality, and key personalization can be satisfied simultaneously in practice.


\begin{figure*}[ht]
    \centering
    \includegraphics[width=0.9\textwidth]{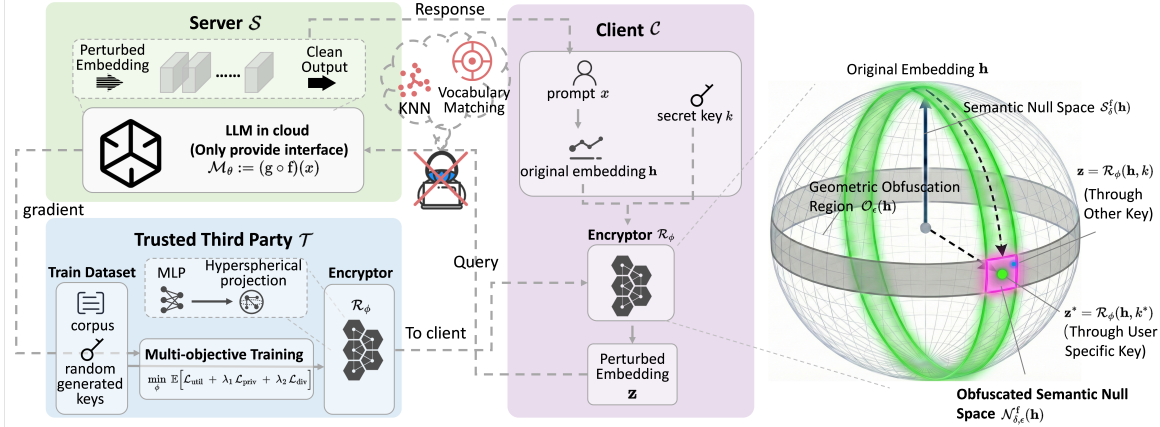}
    \caption{The OSNIP Architecture. Trusted Third Party trains an encryptor using corpora and randomized keys before deploying it client-side. Using their prompts and private keys, clients generate perturbed embeddings that resist privacy attacks by servers or interceptors, while still enabling standard server-side inference.}
    \label{fig:OSNIP_arch}
\end{figure*}

\subsection{Encryption Network}
\label{sec: semantic_null_space_encryption_network}
Given a clean embedding $\mathbf{h}=\text{g}(x)$, the encryptor produces an encrypted embedding
\begin{equation}
\mathbf{z} = \mathcal{R}_{\phi}(\mathbf{h}),
\label{eq:encryptor_def}
\end{equation}
which is then sent to the cloud server for standard inference.
The server-side model parameters $\theta$ remain unchanged throughout training and deployment.

\textbf{Design principle.}
Our goal is to make $\mathbf{z}$ satisfy two requirements simultaneously:
(i) utility preservation---the server-side predictor should produce nearly identical output distributions when fed with $\mathbf{h}$ or $\mathbf{z}$;
(ii) geometric obfuscation---$\mathbf{z}$ should be directionally decorrelated from $\mathbf{h}$ to suppress cosine/KNN-style matching signals.
Instead of explicitly solving for $\mathbf{z}\in\mathcal{N}_{\delta,\epsilon}(\mathbf{h})$, we enforce these two requirements via a simple multi-objective training loss.

\textbf{Utility loss.}
We instantiate the distribution distance $d_{\mathcal{P}}(\cdot,\cdot)$ using the KL divergence between the predictive distributions. We define the utility loss as
\begin{equation}
\mathcal{L}_{\mathrm{util}}
~=~
D_{\mathrm{KL}}\!\big(\text{f}_\theta(\mathbf{h}) \,\|\, \text{f}_\theta(\mathbf{z})\big).
\label{eq:l_util}
\end{equation}

\textbf{Privacy loss.}
To enforce geometric obfuscation, we penalize directional similarity between $\mathbf{h}$ and $\mathbf{z}$ using a hinge loss:
\begin{equation}
\mathcal{L}_{\mathrm{priv}}
~=~
\max\big(0,\, |\cos(\mathbf{h},\mathbf{z})| - \epsilon\big),
\label{eq:l_priv}
\end{equation}
where $\cos(\mathbf{h},\mathbf{z}) = \frac{\langle \mathbf{h},\mathbf{z}\rangle}{\|\mathbf{h}\|_2\|\mathbf{z}\|_2}$ and $\epsilon\in[0,1)$ controls the target orthogonality margin.
This hinge form avoids over-penalizing already-obfuscated embeddings and stabilizes optimization.

\textbf{Overall training objective.}
We train the encryptor parameters $\phi$ by minimizing a weighted sum of the two losses:
\begin{equation}
\min_{\phi}\;
\mathbb{E}_{x\sim \mathcal{D}}
\Big[
\mathcal{L}_{\mathrm{util}}
~+~
\lambda\,\mathcal{L}_{\mathrm{priv}}
\Big],
\label{eq:overall_obj}
\end{equation}
where $\lambda>0$ balances utility preservation and geometric obfuscation.
In practice, we freeze $\theta$ and backpropagate gradients through the server-side predictor $\text{f}_\theta$ to update $\phi$.
This procedure steers the encryptor to output embeddings $\mathbf{z}$ that are simultaneously utility-preserving and directionally separated from $\mathbf{h}$,
thereby injecting perturbations to project the original embeddings into their obfuscated semantic null space defined in Section~\ref{sec: 2.2 theoretical_analysis}.

\textbf{Training and deployment protocol.}
A trusted third party $\mathcal{T}$ performs the above optimization using gradient access to the server-side LLM.
After training, $\mathcal{T}$ deploys the encryptor $\mathcal{R}_\phi$ to the client $\mathcal{C}$.
At inference time, the client computes $\mathbf{h}=\text{g}(x)$, applies $\mathcal{R}_\phi$ to obtain $\mathbf{z}$, and sends $\mathbf{z}$ to the server.
The cloud server then runs the original inference pipeline on $\mathbf{z}$ without any architectural change, partitioning, or cryptographic protocol.

\textbf{Hyperspherical energy projection.}
Standard perturbation can alter the magnitude of vectors, thereby disrupting the distribution of dot-products in subsequent Self-Attention layers. 
To preserve inference stability and adhere to the geometry of the pre-trained manifold, we enforce an Iso-Norm Constraint by projecting $\mathbf{z}$ back onto the hypersphere $\mathbb{S}_{\|h\|}$ of the original radius:
\begin{equation}
    \tilde{\mathbf{z}} = \text{Project}(\mathbf{z}, \mathbf{h}) = (\mathbf{h} + \delta) \cdot \frac{\|\mathbf{h}\|_2}{\|\mathbf{h} + \delta\|_2}
\end{equation}
This operation ensures that the obfuscation is strictly directional, decoupling the semantic magnitude from the geometric orientation, which aligns with the characteristics of our adversary setting modeled in Section~\ref{sec: problem setup}.

\subsection{Key-Conditioned Personalization}
\label{subsec:key}

Section~\ref{sec: semantic_null_space_encryption_network} introduces a base encryptor that maps a clean embedding
$\mathbf{h}$ to an encrypted embedding $\mathbf{z}$ while preserving utility and enforcing geometric obfuscation.
We now extend OSNIP with key-conditioned personalization, enabling different users to obtain distinct (yet utility-preserving)
encryptions for the same input.
This personalization reduces cross-user linkability in embedding logs and limits an attacker’s ability to train a single universal inverter that generalizes across users.
\textbf{Key-dependent encryptor.}
We augment the encryptor (Equation~(\ref{eq:encryptor_def})) with a user-specific secret key $k$:
\begin{equation}
\mathbf{z}=\mathcal{R}_\phi(\mathbf{h},k).
\label{eq:key_encryptor}
\end{equation}
Intuitively, conditioning on $k$ allows OSNIP to generate multiple valid encrypted embeddings for the same $\mathbf{h}$,
which reduces cross-user linkability and mitigates attacks.

\textbf{Key diversity regularization.}
A practical failure mode is that the network may ignore $k$ and collapse to a key-agnostic mapping.
To prevent this, we explicitly encourage \emph{geometric separation} between encrypted outputs produced under different keys.
For the same $\mathbf{h}$, we sample two keys $k_1\neq k_2$ and enforce a margin in Euclidean distance via a hinge loss:
\begin{equation}
\mathcal{L}_{\mathrm{div}}
~=~
\max\Big(0,\ \delta - \big\|\mathcal{R}_\phi(\mathbf{h},k_1)-\mathcal{R}_\phi(\mathbf{h},k_2)\big\|_2\Big),
\label{eq:l_div}
\end{equation}
where $\delta>0$ is a separation margin.
Unlike unbounded maximization, the hinge form stops penalizing once the distance exceeds $\delta$,
which stabilizes training and avoids over-dispersing the encrypted embeddings.

\textbf{Overall objective with keys.}
With key conditioning, the training objective becomes
\begin{equation}
\min_{\phi}\;
\mathbb{E}_{x\sim\mathcal{D},\,k_1\neq k_2}
\Big[
\mathcal{L}_{\mathrm{util}}
~+~
\lambda_1\,\mathcal{L}_{\mathrm{priv}}
~+~
\lambda_2\,\mathcal{L}_{\mathrm{div}}
\Big],
\label{eq:overall_obj_key}
\end{equation}
where $\beta>0$ controls the strength of key personalization.
Section~\ref{sec:optimization} details a dynamic optimization strategy for balancing these objectives in practice.

\subsection{Optimization Strategy: Utility-Gated Curriculum}
\label{sec:optimization}

Training OSNIP involves a multi-objective optimization that simultaneously enforces (i) semantic consistency,
(ii) geometric obfuscation, and (iii) key-conditioned diversity.
Directly optimizing the objectives can cause semantic rigidity: overly strong geometric constraints may push the encrypted representation out of the semantic null space before the encryptor learns stable, utility-preserving directions.

To mitigate this, we introduce a utility-gated curriculum learning strategy. Instead of a static schedule, the constraint weights $\lambda(t)$ are dynamically modulated by two factors: a time-based warmup and a performance-based safety gate.
\begin{equation}
    \lambda(t, \mathcal{L}_{\mathrm{util}}) = \lambda_{\mathrm{base}} \cdot \underbrace{w_{\mathrm{time}}(t)}_{\text{Warmup}} \cdot \underbrace{w_{\mathrm{safe}}(\mathcal{L}_{\mathrm{util}})}_{\text{Safety Gate}}
\end{equation}

\textbf{Time-based Warmup ($w_{\mathrm{time}}$):} During the initial phase, we linearly increase the penalty from 0 to 1 during the initial phase (e.g., first 1k steps). 
    
\textbf{Utility-Gated Safety Mechanism ($w_{\mathrm{safe}}$):} 
Warmup alone is insufficient when the privacy gradients occasionally dominate and destabilize utility.
We therefore introduce a closed-loop gate based on the instantaneous utility loss.
Given a predefined safe interval $[\tau_{\mathrm{low}}, \tau_{\mathrm{high}}]$, the gate smoothly down-weights constraints
when utility degrades:
\begin{equation}
w_{\mathrm{safe}}(\ell) ~=~ \mathrm{clip}\!\left(\frac{\tau_{\mathrm{high}}-\ell}{\tau_{\mathrm{high}}-\tau_{\mathrm{low}}},\,0,\,1\right).
\label{eq:safety_gate}
\end{equation}
When $\mathcal{L}_{\text{util}}$ rises above the safe range, the gate temporarily relaxes the privacy constraints,
allowing the optimization to re-establish semantic consistency before re-applying stronger obfuscation and diversity pressure.
Empirically, this utility-gated curriculum makes the three objectives compatible and yields stable convergence without sacrificing the denoising-free inference property of OSNIP.

\section{Experiments}
\label{sec:experiments}

\subsection{Experimental Setup}
\label{sec:exp_setup}

\textbf{Models.} We evaluate OSNIP on four open-source LLMs: Llama-3.2-1B, Llama-3.2-3B-Instruct~\cite{grattafiori2024llama}, Qwen3-14B and Qwen3-32B~\cite{yang2025qwen3}. 

\textbf{Baselines.} 
We compare against Cape~\cite{wu2025cape}, DYNTEXT~\cite{zhang-etal-2025-dyntext}, and InferDPT~\cite{10922117}, using official/default configurations. For each baseline, we adopt the privacy budget settings reported to yield the best utility--privacy trade-off in their respective papers or official implementations.

\textbf{Training Data.} 
We train the encryptor on a 65k-sample mixture: 80\% general instructions (Alpaca~\cite{alpaca}) and 20\% science/reasoning (ARC-Easy/Challenge~\cite{clark2018think} and SciQ~\cite{DBLP:conf/aclnut/WelblLG17}).

\textbf{Benchmarks.} We evaluate both closed- and open-ended tasks.
\emph{Closed-ended:} MMLU~\cite{DBLP:conf/iclr/HendrycksBBZMSS21}, ARC-Easy~\cite{clark2018think}, HellaSwag~\cite{zellers-etal-2019-hellaswag}, PIQA~\cite{DBLP:conf/aaai/BiskZLGC20}, MNLI~\cite{williams-etal-2018-broad}, SST2~\cite{socher-etal-2013-recursive}, ANLI (R1/R2/R3)~\cite{nie-etal-2020-adversarial}, and WiC~\cite{pilehvar-camacho-collados-2019-wic}.
\emph{Open-ended:} (i) text completion on WikiText-2~\cite{DBLP:conf/iclr/MerityX0S17} (PPL with a sliding window), and (ii) summarization on 1,000 CNN/DailyMail~\cite{NIPS2015_afdec700} test samples (ROUGE-L~\cite{lin-2004-rouge} and BERTScore~\cite{DBLP:conf/iclr/ZhangKWWA20}).

\textbf{Privacy Metrics.} We simulate a semi-honest server and report ASR under two embedding-based attacks.
\emph{KNN attack}: the attacker ranks all vocabulary tokens by Euclidean distance to the perturbed token embedding and checks whether the ground-truth token appears in the Top-$k$ list (we report $k\in\{1,5,10\}$).
\emph{Vocabulary-matching attack}: the attacker reconstructs the sequence autoregressively by selecting, at each step, the vocabulary token whose hidden state is closest (in $L_1$ distance) to the perturbed representation, and feeds the output back for the next step.

\subsection{Main Results}
\label{sec:main_results}

\begin{table*}[t]
\caption{Main Results on Privacy-Utility Trade-off. Privacy is measured by KNN-Top10 Attack Success Rate (ASR $\downarrow$) in Section~\ref{sec:defense_results}. Utility is measured by accuracy on 10 benchmarks ($\uparrow$). Avg denotes the average accuracy across all 10 tasks. RP (Retained Performance) indicates the percentage of utility preserved relative to the Non-private baseline. For baselines, we adopt the hyperparameter settings that yield their best performance for comparison.}
\label{tab:main_results}
\centering
\begin{small}
\begin{sc}
\resizebox{\textwidth}{!}{
\renewcommand{\arraystretch}{1.15}
\setlength{\tabcolsep}{2.2pt}
\begin{tabular}{c|l|c|c|cccccccccc|cc}
\toprule
\multirow{2}{*}{\textbf{Model}} & \multirow{2}{*}{\textbf{Method}} & \multirow{2}{*}{\textbf{Conf.}} & \textbf{Priv.} & \multicolumn{10}{c|}{\textbf{Utility Metrics (Accuracy $\uparrow$)}} & \multicolumn{2}{c}{\textbf{Summary}} \\
& & & ASR $\downarrow$ & ARC-E & PIQA & MNLI & SST2 & ANLI1 & ANLI2 & ANLI3 & WiC & HellaS$^\dagger$ & MMLU$^\dagger$ & \textbf{Avg} & \textbf{RP} \\
\midrule
\multirow{5}{*}{\rotatebox{90}{\footnotesize Qwen3-32B}}
& \gc Non-private & \gc / & \gc 1.000 & \gc 0.830 & \gc 0.819 & \gc 0.699 & \gc 0.929 & \gc 0.691 & \gc 0.616 & \gc 0.589 & \gc 0.657 & \gc 0.826 & \gc 0.818 & \gc 0.747 & \gc 100\% \\
& Cape & $\epsilon$=14 & 0.578 & 0.593 & 0.693 & 0.531 & 0.747 & 0.511 & 0.459 & 0.468 & 0.539 & 0.340 & 0.339 & 0.522 & 69.84\% \\
& DYNText & $\epsilon$=2.5 & 0.761 & 0.480 & 0.552 & 0.394 & 0.530 & 0.396 & 0.378 & 0.398 & 0.520 & 0.394 & 0.238 & 0.428 & 57.27\% \\
& InferDPT & $\epsilon$=6 & 0.622 & 0.265 & 0.496 & 0.338 & 0.493 & 0.333 & 0.328 & 0.336 & 0.506 & 0.279 & 0.249 & 0.362 & 48.44\% \\
& \yc \textbf{OSNIP (Ours)} & \yc / & \yc \textbf{0.000} & \yc \textbf{0.830} & \yc \textbf{0.814} & \yc \textbf{0.698} & \yc \textbf{0.929} & \yc \textbf{0.702} & \yc \textbf{0.612} & \yc \textbf{0.593} & \yc \textbf{0.665} & \yc \textbf{0.825} & \yc \textbf{0.812} & \yc \textbf{0.748} & \yc \textbf{100.13\%} \\
\midrule
\multirow{5}{*}{\rotatebox{90}{\footnotesize Qwen3-14B}}
& \gc Non-private & \gc / & \gc 1.000 & \gc 0.830 & \gc 0.799 & \gc 0.670 & \gc 0.924 & \gc 0.665 & \gc 0.574 & \gc 0.557 & \gc 0.577 & \gc 0.788 & \gc 0.787 & \gc 0.717 & \gc 100\% \\
& Cape & $\epsilon$=14 & 0.334 & 0.395 & 0.563 & 0.374 & 0.567 & 0.332 & 0.345 & 0.339 & 0.531 & 0.317 & 0.253 & 0.402 & 56.07\% \\
& DYNText & $\epsilon$=2.5 & 0.745 & 0.456 & 0.527 & 0.355 & 0.523 & 0.366 & 0.362 & 0.398 & 0.527 & 0.375 & 0.240 & 0.413 & 57.58\% \\
& InferDPT & $\epsilon$=6 & 0.616 & 0.267 & 0.508 & 0.339 & 0.502 & 0.337 & 0.314 & 0.316 & 0.498 & 0.274 & 0.247 & 0.360 & 50.21\% \\
& \yc \textbf{OSNIP (Ours)} & \yc / & \yc \textbf{0.000} & \yc \textbf{0.827} & \yc \textbf{0.792} & \yc \textbf{0.649} & \yc \textbf{0.921} & \yc \textbf{0.641} & \yc \textbf{0.541} & \yc \textbf{0.548} & \yc \textbf{0.589} & \yc \textbf{0.787} & \yc \textbf{0.787} & \yc \textbf{0.708} & \yc \textbf{98.76\%} \\
\midrule
\multirow{5}{*}{\rotatebox{90}{\scriptsize \shortstack{Llama-3.2-3B-\\Instruct}}}
& \gc Non-private & \gc / & \gc 1.000 & \gc 0.712 & \gc 0.768 & \gc 0.542 & \gc 0.867 & \gc 0.440 & \gc 0.401 & \gc 0.425 & \gc 0.497 & \gc 0.716 & \gc 0.606 & \gc 0.597 & \gc 100\% \\
& Cape & $\epsilon$=14 & 0.404 & 0.490 & 0.613 & 0.461 & 0.642 & 0.366 & 0.345 & 0.365 & 0.486 & 0.324 & 0.245 & 0.434 & 72.70\% \\
& DYNText & $\epsilon$=2.5 & 0.807 & 0.266 & 0.517 & 0.321 & 0.511 & 0.349 & 0.339 & 0.348 & 0.480 & 0.275 & 0.243 & 0.365 & 61.14\% \\
& InferDPT & $\epsilon$=6 & 0.667 & 0.269 & 0.509 & 0.331 & 0.508 & 0.352 & 0.320 & 0.333 & 0.491 & 0.271 & 0.239 & 0.362 & 60.65\% \\
& \yc \textbf{OSNIP (Ours)} & \yc / & \yc \textbf{0.021} & \yc \textbf{0.707} & \yc \textbf{0.766} & \yc \textbf{0.535} & \yc \textbf{0.825} & \yc \textbf{0.431} & \yc \textbf{0.407} & \yc \textbf{0.410} & \yc \textbf{0.513} & \yc \textbf{0.709} & \yc \textbf{0.572} & \yc \textbf{0.588} & \yc \textbf{98.49\%} \\
\midrule
\multirow{5}{*}{\rotatebox{90}{\scriptsize Llama-3.2-1B}}
& \gc Non-private & \gc / & \gc 1.000 & \gc 0.617 & \gc 0.748 & \gc 0.358 & \gc 0.580 & \gc 0.349 & \gc 0.331 & \gc 0.349 & \gc 0.445 & \gc 0.642 & \gc 0.314 & \gc 0.473 & \gc 100\% \\
& Cape & $\epsilon$=14 & 0.410 & 0.449 & 0.615 & 0.352 & 0.587 & 0.339 & 0.327 & 0.324 & 0.461 & 0.310 & 0.230 & 0.399 & 84.36\% \\
& DYNText & $\epsilon$=2.5 & 0.802 & 0.265 & 0.522 & 0.327 & 0.490 & 0.325 & 0.320 & 0.333 & 0.487 & 0.266 & 0.264 & 0.360 & 76.11\% \\
& InferDPT & $\epsilon$=6 & 0.661 & 0.263 & 0.498 & 0.346 & 0.518 & \textbf{0.353} & 0.329 & 0.310 & \textbf{0.511} & 0.269 & 0.238 & 0.364 & 76.96\% \\
& \yc \textbf{OSNIP (Ours)} & \yc / & \yc \textbf{0.066} & \yc \textbf{0.603} & \yc \textbf{0.719} & \yc \textbf{0.358} & \yc \textbf{0.733} & \yc 0.337 & \yc \textbf{0.336} & \yc \textbf{0.333} & \yc 0.481 & \yc \textbf{0.615} & \yc \textbf{0.282} & \yc \textbf{0.480} & \yc \textbf{101.48\%} \\
\bottomrule
\end{tabular}}
\end{sc}
\end{small}
\end{table*}
\textbf{Superior utility preservation across scales.}
As shown in Table~\ref{tab:main_results}, OSNIP achieves near-lossless utility retention on standard benchmarks. This advantage extends to generative tasks: Table~\ref{tab:ppl_results} reveals that baselines suffer from an ``Explosion'', while OSNIP maintains a stable, low-perplexity profile consistent with the original model. 
Crucially, for the summarization task shown in Table~\ref{tab:summarization_results}, OSNIP achieves near-lossless semantic preservation. Unlike InferDPT, which depends on reconstruction to recover utility, OSNIP preserves this high fidelity in a fully end-to-end manner.

\textbf{The ``dimensionality dividend''.}
Regarding privacy, OSNIP achieves near ``zero-leakage'' privacy, while baselines cannot reduce ASR below 0.3 without severely harming utility.
We observe a distinct positive scalability for OSNIP: as model size grows, its privacy-utility trade-off improves. 
We attribute this to high-dimensional representations: as model dimension grows, the Obfuscated Semantic Null Space expands, providing exponentially more freedom to find an optimal perturbation $\mathbf{z}$ that both strengthens privacy and preserves inference performance.
In contrast, baselines exhibit inverse scalability, where the ``safe'' replacement region narrows in larger models, leading to the observed utility collapse on larger models.

\textbf{Mechanism Analysis: Continuous vs. Discrete.}
The performance gap highlights the fragility of discrete token perturbation: even near-synonym substitutions can break the cues that support multi-step reasoning.
OSNIP avoids this by operating in the continuous embedding space. By navigating the obfuscated semantic null space, OSNIP protects privacy without compromising the model's intrinsic reasoning or generative capabilities.

\begin{table}[h]
\caption{Perplexity (PPL) evaluation on WikiText-2. NON-P denotes the non-private baseline, and PRIV. refers to the private methods. $\Delta$\% indicates the percentage increase in PPL.}\label{tab:ppl_results}
\centering
\resizebox{\columnwidth}{!}{
\begin{small}
    \begin{sc}
    \setlength{\tabcolsep}{2pt}
    \begin{tabular}{l|l|c|cc|r}
    \toprule
    \textbf{Model} & \textbf{Method} & \textbf{Conf.} & \textbf{NON-P} & \textbf{Priv. ($\downarrow$)} & \textbf{$\Delta$ (\% $\downarrow$)} \\
    \midrule
    \multirow{4}{*}{\shortstack[l]{Qwen3-32B}} 
     & Cape & $\epsilon$=14 & 6.73 & 62.70 & +832\% \\
     & DYNText & $\epsilon$=2.5 & 6.73 & 53.94 & +701\% \\
     & InferDPT & $\epsilon$=6 & 6.73 & 1155.74 & +17073\% \\
     & \yc \textbf{OSNIP} & \yc / & \yc 6.73 & \yc \textbf{6.94} & \yc \textbf{+3\%} \\
    \midrule
    \multirow{4}{*}{\shortstack[l]{Qwen3-14B}} 
     & Cape & $\epsilon$=14 & 7.59 & 67.55 & +790\% \\
     & DYNText & $\epsilon$=2.5 & 7.59 & 58.48 & +670\% \\
     & InferDPT & $\epsilon$=6 & 7.59 & 1184.29 & +15503\% \\
     & \yc \textbf{OSNIP} & \yc / & \yc 7.59 & \yc \textbf{7.69} & \yc \textbf{+1\%} \\
    \midrule
    \multirow{4}{*}{\shortstack[l]{Llama-3.2-\\3B-Instruct}} 
     & Cape & $\epsilon$=14 & 9.63 & 97.56 & +913\% \\
     & DYNText & $\epsilon$=2.5 & 9.63 & 112.24 & +1066\% \\
     & InferDPT & $\epsilon$=6 & 9.63 & 2869.66 & +29699\% \\
     & \yc \textbf{OSNIP} & \yc / & \yc 9.63 & \yc \textbf{12.58} & \yc \textbf{+31\%} \\
    \midrule
    \multirow{4}{*}{\shortstack[l]{Llama-3.2-\\1B}} 
     & Cape & $\epsilon$=14 & 8.63 & 80.58 & +834\% \\
     & DYNText & $\epsilon$=2.5 & 8.63 & 82.22 & +853\% \\
     & InferDPT & $\epsilon$=6 & 8.63 & 2074.80 & +23942\% \\
     & \yc \textbf{OSNIP} & \yc / & \yc 8.63 & \yc \textbf{11.27} & \yc \textbf{+31\%} \\
    \bottomrule
    \end{tabular}
    \end{sc}
    \end{small}
}
\end{table}

\begin{table}[t]
\caption{Text summarization performance on CNN/DailyMail. Evaluation metrics include ROUGE-L and BERTScore. Non-P denotes the non-private baseline, and RP (Retention Performance) indicates the percentage of BERTScore preserved relative to the baseline.}
\label{tab:summarization_results}
\centering
\resizebox{\columnwidth}{!}{
\begin{small}
    \begin{sc}
    \setlength{\tabcolsep}{1.5pt}
    \begin{tabular}{l|l|cc|ccc}
    \toprule
    \multirow{2}{*}{\textbf{Model}} & \multirow{2}{*}{\textbf{Method}} & \multicolumn{2}{c|}{\textbf{ROUGE-L ($\uparrow$)}} & \multicolumn{3}{c}{\textbf{BERTScore ($\uparrow$)}} \\
     & & Non-P & Priv. & Non-P & Priv. & \textbf{RP} \\
    \midrule
    \multirow{4}{*}{\shortstack[l]{Qwen3-32B}} 
     & Cape & 0.183 & 0.150 & 0.866 & 0.855 & 98.7\% \\
     & DYNText & 0.183 & 0.118 & 0.866 & 0.836 & 96.5\% \\
     & InferDPT & 0.183 & \textbf{0.197} & 0.866 & \textbf{0.866} & \textbf{100.0\%} \\
     & \yc \textbf{OSNIP} & \yc 0.183 & \yc 0.182 & \yc 0.866 & \yc 0.865 & \yc \textbf{99.9}\% \\
    \midrule
    \multirow{4}{*}{\shortstack[l]{Qwen3-14B}} 
     & Cape & 0.163 & 0.132 & 0.852 & 0.840 & 98.6\% \\
     & DYNText & 0.163 & 0.117 & 0.852 & 0.837 & 98.2\% \\
     & InferDPT & 0.163 & \textbf{0.207} & 0.852 & \textbf{0.868} & \textbf{101.9\%} \\
     & \yc \textbf{OSNIP} & \yc 0.163 & \yc 0.164 & \yc 0.852 & \yc 0.853 & \yc 100.1\% \\
    \midrule
    \multirow{4}{*}{\shortstack[l]{Llama-3.2-\\3B-Instruct}} 
     & Cape & 0.190 & 0.151 & 0.861 & 0.851 & 98.8\% \\
     & DYNText & 0.190 & 0.094 & 0.861 & 0.822 & 95.5\% \\
     & InferDPT & 0.190 & \textbf{0.201} & 0.861 & \textbf{0.868} & \textbf{100.8\%} \\
     & \yc \textbf{OSNIP} & \yc 0.190 & \yc 0.192 & \yc 0.861 & \yc 0.862 & \yc 100.1\% \\
    \midrule
    \multirow{4}{*}{\shortstack[l]{Llama-3.2-1B}} 
     & Cape & 0.178 & 0.138 & 0.854 & 0.825 & 96.6\% \\
     & DYNText & 0.178 & 0.102 & 0.854 & 0.819 & 95.9\% \\
     & InferDPT & 0.178 & \textbf{0.205} & 0.854 & \textbf{0.868} & \textbf{101.6\%} \\
     & \yc \textbf{OSNIP} & \yc 0.178 & \yc 0.157 & \yc 0.854 & \yc 0.840 & \yc 98.4\% \\
    \bottomrule
    \end{tabular}
    \end{sc}
    \end{small}
}
\end{table}
\vspace{-1em}
\subsection{Defense Against Attacks}
\label{sec:defense_results}
We evaluate the robustness of OSNIP against two distinct inversion attacks: the KNN attack and the vocabulary-matching attack. As shown in our evaluation, while baseline methods remain vulnerable to these inversion attacks, OSNIP achieves SOTA defense against KNN attack and maintains a low ASR against vocabulary-matching attack.(Detailed in Appendix~\ref{app:defense_against_attacks}) 
.

\subsection{Comprehensive Comparison}
\label{sec:comparison}
OSNIP simultaneously remains post-processing free, performs reliably across continuation, summarization, and challenging reasoning tasks, and maintains low latency.
By generating encrypted embeddings that remain directly usable by the frozen LLM, OSNIP avoids reconstruction/denoising, while baselines either require post-processing or fail on continuation and challenging settings. OSNIP also incurs negligible overhead, substantially lower than baselines, making it suitable for latency-sensitive deployment.



\begin{table}[t]
\caption{$\textbf{No Post-Proc.}$ indicates whether post-processing is required. $\textbf{Cont.}$, $\textbf{Summ.}$, and $\textbf{Chal.}$ denote Continuation, Summarization, and Challenging tasks, respectively, in Tables \ref{tab:main_results}(Benchmarks labelled $^\dagger$), \ref{tab:ppl_results}, and \ref{tab:summarization_results}. For $\textbf{Cont.}$, a result is marked with $\times$ if $\text{PPL} > 20$. For $\textbf{Chal.}$, $\times$ indicates an average $\text{RP} < 50\%$ across all models.
\textbf{Overhead} measures the additional inference latency. (Detailed in Appendix~\ref{app:latency_experiments})}
\label{tab:comprehensive_comparison}
\begin{center}
\resizebox{\linewidth}{!}{
\begin{small}
\setlength{\tabcolsep}{4pt}
\begin{tabular}{lccccc}
\toprule
\textbf{Method} & 
\textbf{\begin{tabular}{@{}c@{}}No Post-\\ Proc.\end{tabular}} & 
\textbf{Cont.} & 
\textbf{Summ.} & 
\textbf{Chal.} & 
\textbf{\begin{tabular}{@{}c@{}}Overhead \\ (ms/prompt)\end{tabular}} \\
\midrule
CAPE & \checkmark & $\times$ & \checkmark & $\times$ & 98.36 \\
DYNTEXT & \checkmark & $\times$ & \checkmark & $\times$ & 4.93 \\
INFERDPT & $\times$ & $\times$ & \checkmark & $\times$ & 16.87 \\
\midrule
\textbf{OSNIP (Ours)} & \checkmark & \checkmark & \checkmark & \checkmark & \textbf{0.96} \\
\bottomrule
\end{tabular}
\end{small}
}
\end{center}
\vskip -0.1in
\end{table}

\subsection{Further Analysis}
\label{sec:further_analysis}

\textbf{Convergence Analysis.}
\label{para:further_analysis_mechanism}
To validate our optimization strategy, we first examine the OSNIP learning trajectory. As shown in Figure~\ref{fig:loss_curves_32B}, the optimization converges stably.
This indicates that our method efficiently locates the obfuscated semantic null space, successfully balancing the dual objectives in Section~\ref{subsec:key}. 
Consistent convergence patterns across different models are in Appendix~\ref{app:trend_analysis}.

\begin{figure}[t]
    \centering
    \includegraphics[width=0.8\linewidth]{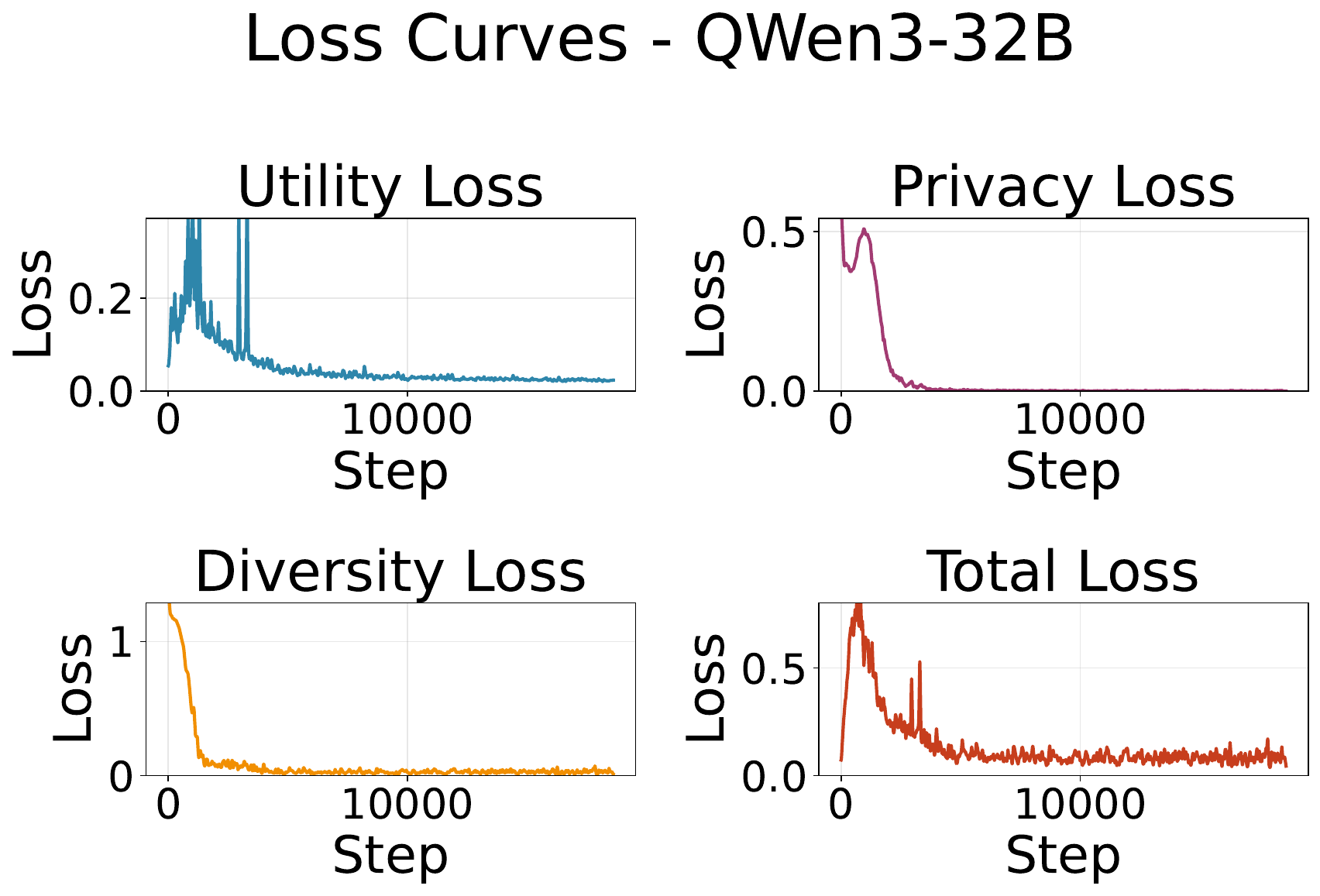}
    \caption{Optimization Dynamics of OSNIP on QWen3-32B.}
    \label{fig:loss_curves_32B}
\end{figure}

\textbf{Internal Dynamics Analysis.}
To show how OSNIP balances privacy and semantic consistency, we analyze the trajectory of perturbed embeddings during inference. As shown in Figure~\ref{fig:trajectory_analysis}, both OSNIP and random noise yield low cosine similarity at the input layer, indicating effective initial obfuscation.
However, as propagation continues, OSNIP-perturbed embeddings rapidly recover, whereas random noise causes irreversible semantic collapse.
It shows that OSNIP makes perturbations hard to reconstruct yet effectively invisible to the model, yielding near-perfect semantic preservation (see Appendix~\ref{app:mechanistic_setup} for detailed heatmap).

\begin{figure}[t]
\centering
\begin{subfigure}{0.48\linewidth}
    \centering
    \includegraphics[width=\linewidth]{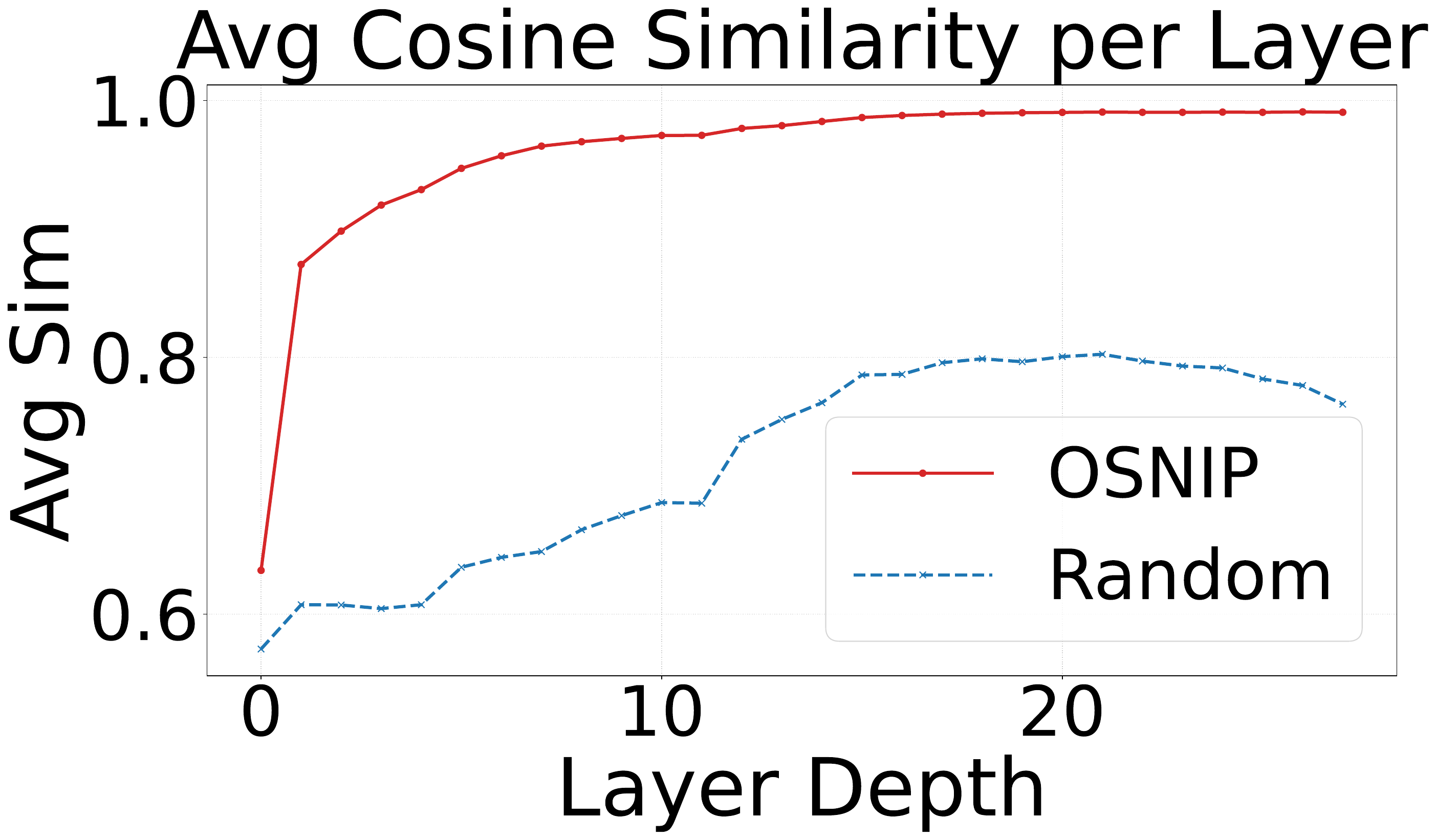}
    \caption{Layer-wise Similarity}
    \label{fig:layer_sim}
\end{subfigure}
\hfill
\begin{subfigure}{0.48\linewidth}
    \centering
    \includegraphics[width=\linewidth]{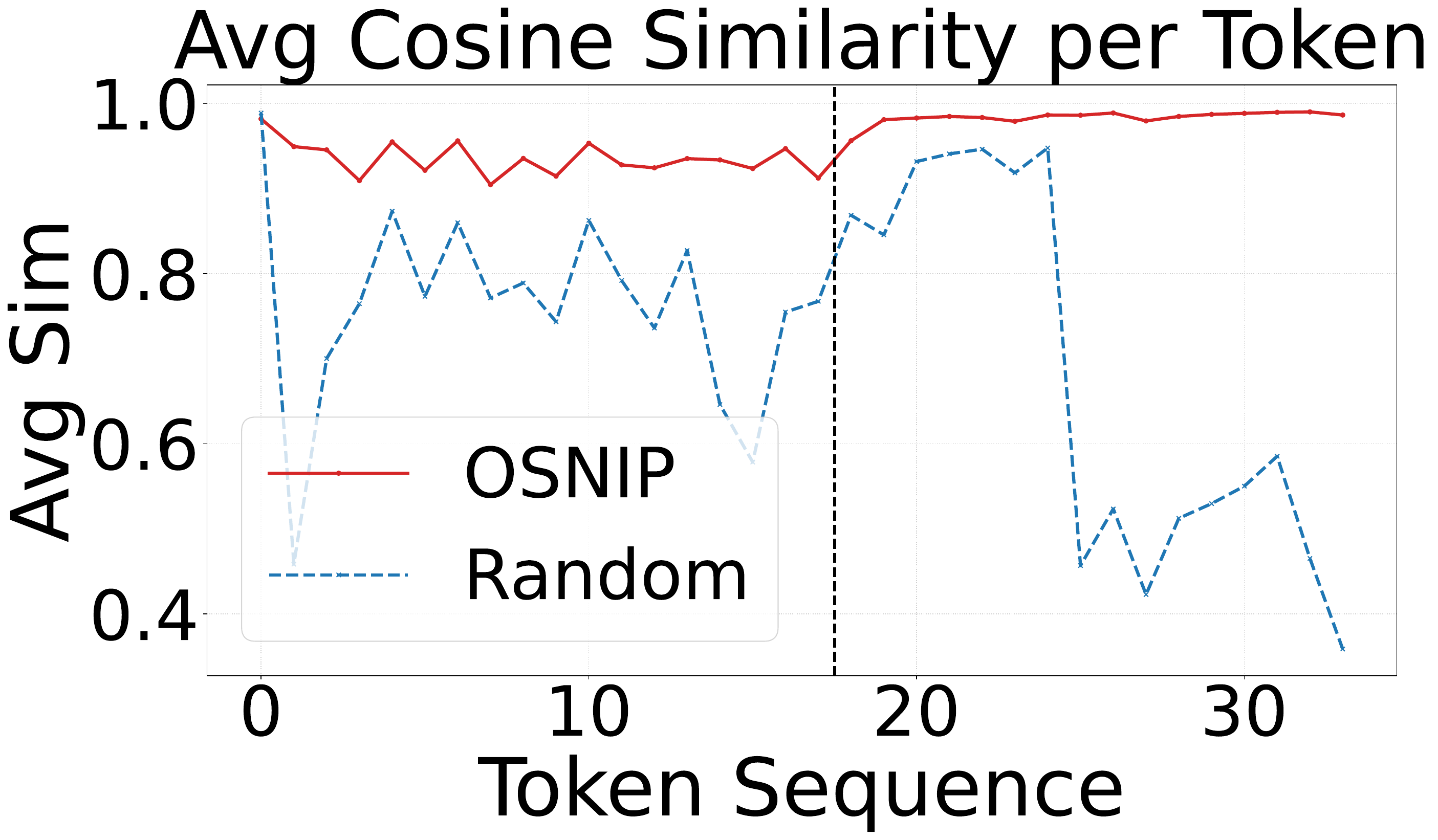}
    \caption{Token-wise Stability}
    \label{fig:token_sim}
\end{subfigure}
\caption{Quantitative analysis of the “Perturbation-and-Recover” trajectory. From left to right, the panels show: (a) the layer-wise similarity, and (b) the token-wise similarity. Full heatmaps are detailed in Appendix~\ref{app:mechanistic_setup}.}
\label{fig:trajectory_analysis}
\end{figure}

\textbf{Dimensionality Enables Privacy.}
Figure~\ref{fig:pareto_frontier} visualizes the privacy-utility Pareto frontiers across varying model scales. We observe a distinct geometric transition: smaller models are confined to a rigid linear trade-off, implying that sufficient geometric obfuscation inevitably degrades semantic fidelity, hindered by the representation area discussed in Corollary~\ref{cor: asymptotic}. In contrast, larger models manifest a sharp ``L-shaped'' frontier, successfully unlocking the optimal region. This empirically confirms the existence of a dimensionality dividend (Theorem~\ref{thm:existence}).

\begin{figure}[t]
\centering
\includegraphics[width=0.95\linewidth]{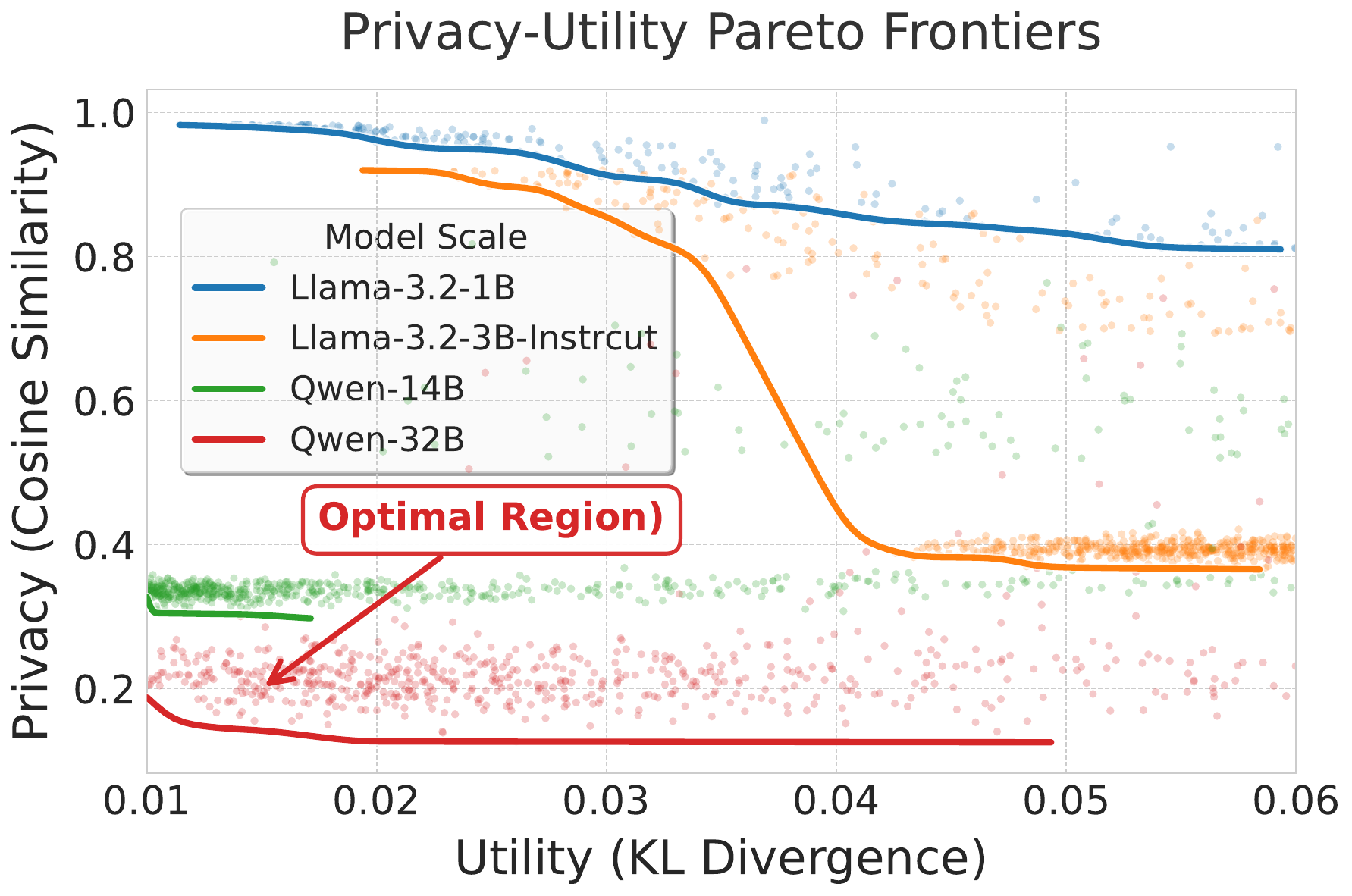} 
\caption{Privacy-Utility Pareto Frontiers. Visualization of the trade-off between Privacy and Utility. Scatter points represent distinct feasible solutions explored within the optimization landscape, while solid lines delineate the empirical Pareto frontiers.}
\label{fig:pareto_frontier}
\end{figure}
\vspace{-10pt}
\section{Conclusion}
\label{sec:conclusion}

In this paper, we study privacy-preserving inference for Model-as-a-Service LLMs and propose \textbf{OSNIP}, a lightweight client-side encryption framework for denoising-free protection.
OSNIP maps clean embeddings into an \emph{obfuscated semantic null space} that preserves utility while suppressing embedding-based reconstruction.
Across diverse closed- and open-ended benchmarks, OSNIP offers a strong utility–privacy trade-off with negligible inference overhead, enabling real-time deployment.
In the future, we seek to scale OSNIP on larger LLMs and other large models like vision language models.
\section*{Impact Statement}
This paper presents work whose goal is to advance the field of machine learning. There are many potential societal consequences of our work, none of which we feel must be specifically highlighted here.

\bibliography{ICML2026}
\bibliographystyle{icml2026}

\newpage
\appendix
\onecolumn
\section{Related Work}
\label{app: related_work}

\textbf{Privacy Concerns in Model-as-a-Service.}With the widespread adoption of Large Language Models (LLMs), Model-as-a-Service (MaaS) has emerged as a dominant deployment paradigm, allowing users to access powerful capabilities via cloud-based APIs. However, this centralized architecture requires the transmission of user data to untrusted servers. As systematically surveyed by Pan et al~\cite{9152761}, this service paradigm raises severe privacy concerns. Crucially, beyond the well-known risk of extracting training data~\cite{DBLP:conf/uss/CarliniTWJHLRBS21}, the transmission of user prompts during inference exposes sensitive real-time information. To address this, recent studies have primarily focused on two directions: uncovering vulnerabilities through inversion attacks and developing privacy-preserving mechanisms.

\textbf{Inversion Attacks.} 
While embeddings were once viewed as obfuscated representations, Song and Raghunathan~\cite{10.1145/3372297.3417270} first demonstrated that original text could be recovered from embeddings. Morris et al.~\cite{DBLP:conf/emnlp/MorrisKSR23} introduced Vec2Text, a training-based method that inverts vectors with high precision using sequence-to-sequence models. This threat model has been further expanded: Thomas et al.~\cite{pal2025hidden} proposed a vocabulary-matching attack via hidden states, Chen et al.~\cite{chen-etal-2024-text} revealed vulnerabilities in multilingual models, and Li et al.~\cite{li-etal-2023-sentence} demonstrated effective inversion of sentence-level embeddings. These findings demonstrate that transmitting representations cannot guarantee data privacy, requiring more effective protection mechanisms.

\textbf{Cryptographic Privacy-Preserving Inference.} 
Cryptographic protocols provide provable security guarantees. Targeting the specific structures of Transformers, Iron~\cite{NEURIPS2022_64e2449d} proposes a hybrid framework combining Homomorphic Encryption(HE) and Multi-Party Computation (MPC) to optimize Transformer inference. More recently, Puma~\cite{DBLP:journals/corr/abs-2307-12533} and CipherGPT~\cite{DBLP:journals/iacr/Hou0LLLH023} proposed efficient protocols designed for large-scale models. Despite these advancements, the computational and communication overhead for non-linear operations remains substantial, rendering these solutions impractical for real-time, low-latency applications.

\textbf{Perturbation-based Defense.} 
To address the latency of cryptographic methods, researchers have explored perturbation techniques. One stream focuses on text sanitization~\cite{DBLP:conf/wsdm/FeyisetanBDD20}: SanText~\cite{yue-etal-2021-differential} utilize metric-based mechanisms to replace sensitive tokens, while TextObfuscator~\cite{zhou-etal-2023-textobfuscator} clusters representations for obfuscation. CusText~\cite{chen-etal-2023-customized} further introduces customized sanitization strategies. Another stream injects noise into embeddings, adapting mechanisms from deep learning with differential privacy~\cite{10.1145/2976749.2978318}, such as InferDPT~\cite{10922117}, to satisfy Local Differential Privacy (LDP)~\cite{10.1007/11787006_1}. However, as analyzed by Mattern et al.~\cite{mattern-etal-2022-limits}, word-level DP faces theoretical utility limits. Consequently, these methods often struggle with the privacy-utility trade-off, where sufficient noise to thwart attacks degrades the semantic coherence required for downstream tasks.

\section{Proofs}
\label{app: proofs}

Throughout, fix $\mathbf{h}\in\mathbb R^d$ and let $r:=\|\mathbf{h}\|_2$.
Recall the unit sphere $\mathbb S^{d-1}_1$ and the radius-$r$ sphere
$\mathbb S^{d-1}_r:=\{\mathbf{z}\in\mathbb R^d:\|\mathbf{z}\|_2=r\}$, and let
$\mu_{d,r}$ denote the uniform probability measure on $\mathbb S^{d-1}_r$.
We also write $\sigma_{d-1}$ for the uniform probability measure on $\mathbb S^{d-1}_1$.
Recall the definitions in Equations~\eqref{eq:semantic_slice}--\eqref{eq:ndir_def}:
\[
\Omega_{\delta}(\mathbf{h})
:=
\Big\{ \mathbf{u}\in \mathbb{S}^{d-1}_1 \;\big|\;
d_{\mathcal{P}}\!\big(f_\theta(\mathbf{h}), f_\theta(r\mathbf{u})\big) \le \delta \Big\},
\qquad
\alpha_\delta(\mathbf{h})
:=
\sigma_{d-1}\!\big(\Omega_\delta(\mathbf{h})\big),
\]
\[
\widehat{\mathbf{h}}:=\mathbf{h}/\|\mathbf{h}\|_2,
\qquad
\mathcal{B}_{\epsilon}(\mathbf{h})
:=
\Big\{\mathbf{u}\in\mathbb{S}^{d-1}_1 \;\big|\; |\langle \mathbf{u},\widehat{\mathbf{h}}\rangle|\le \epsilon \Big\},
\qquad
\mathcal{N}^{\mathrm{dir}}_{\delta,\epsilon}(\mathbf{h})
:=
\Omega_\delta(\mathbf{h}) \cap \mathcal{B}_\epsilon(\mathbf{h}).
\]

\begin{lemma}[Spherical orthogonal-band complement bound]
\label{lem:band-complement}
Let $\mathbf{U}\sim \mathrm{Unif}(\mathbb S^{d-1}_1)$ and fix any unit vector
$\mathbf{v}\in\mathbb S^{d-1}_1$. For any $\epsilon\in(0,1)$ and $d\ge 3$,
\begin{equation}
\mathbb{P}\!\big(|\langle \mathbf{U},\mathbf{v}\rangle|>\epsilon\big)
\;\le\;
2\exp\!\Big(-\frac{(d-2)\epsilon^2}{2}\Big).
\label{eq:cap-bound}
\end{equation}
In particular,
\begin{equation}
\sigma_{d-1}\!\big(\mathcal{B}_\epsilon(\mathbf{h})^c\big)
\;\le\;
2\exp\!\Big(-\frac{(d-2)\epsilon^2}{2}\Big).
\label{eq:band-complement-measure}
\end{equation}
\end{lemma}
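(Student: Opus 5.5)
By rotation invariance of $\sigma_{d-1}$ we may take $\mathbf v=e_1$. The statement then reduces to a one-dimensional tail bound for $T:=\langle \mathbf U,e_1\rangle=U_1$. The second claim \eqref{eq:band-complement-measure} is the special case $\mathbf v=\widehat{\mathbf h}$, because $\mathcal B_\epsilon(\mathbf h)^c=\{\mathbf u:|\langle\mathbf u,\widehat{\mathbf h}\rangle|>\epsilon\}$ and $\sigma_{d-1}$ is the law of $\mathbf U$. By symmetry of $T$ it suffices to prove $\mathbb P(T>\epsilon)\le \exp(-(d-2)\epsilon^2/2)$ and then double it.

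We use the explicit density of a single coordinate of a uniform point on $\mathbb S^{d-1}$. For $d\ge 3$,
\[
p_d(t)=c_d\,(1-t^2)^{\frac{d-3}{2}},\qquad t\in[-1,1],\qquad c_d=\frac{\Gamma(d/2)}{\sqrt{\pi}\,\Gamma((d-1)/2)}.
\]
This follows by slicing the sphere at height $t$ or by the Gaussian representation $\mathbf U=\mathbf G/\|\mathbf G\|$. Since $1-t^2\le e^{-t^2}$, we get $p_d(t)\le c_d\,e^{-(d-3)t^2/2}$. The tail estimate then works out as follows: substitute $t=\epsilon+s$ and use $(\epsilon+s)^2\ge \epsilon^2+s^2$ for $s\ge0$, obtaining
\[
\mathbb P(T>\epsilon)\le c_d\,e^{-(d-3)\epsilon^2/2}\int_0^\infty e^{-(d-3)s^2/2}\,ds=c_d\,e^{-(d-3)\epsilon^2/2}\sqrt{\tfrac{\pi}{2(d-3)}}.
\]
This gives exponent $d-3$, not $d-2$, and it breaks at $d=3$.

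The main obstacle is therefore the exact exponent $(d-2)$ together with the constant $1$ in front. To handle it, we plan a slightly sharper route. We write $\mathbb P(T>\epsilon)=c_d\int_\epsilon^1(1-t^2)^{(d-3)/2}dt$, and on $[\epsilon,1]$ we use $1\le t/\epsilon$ to insert a factor $t$. The integral then becomes exact:
\[
\int_\epsilon^1 t(1-t^2)^{\frac{d-3}{2}}dt=\frac{(1-\epsilon^2)^{\frac{d-1}{2}}}{d-1},
\]
so that
\[
\mathbb P(T>\epsilon)\le \frac{c_d}{\epsilon(d-1)}(1-\epsilon^2)^{\frac{d-1}{2}}\le \frac{c_d}{\epsilon(d-1)}e^{-(d-1)\epsilon^2/2}.
\]
We bound $c_d\le\sqrt{(d-1)/(2\pi)}$ via Gautschi's inequality, which gives the prefactor $\frac{1}{\epsilon\sqrt{2\pi(d-1)}}$. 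If $\epsilon\sqrt{d-1}\ge 1/\sqrt{2\pi}$, this prefactor is at most $1\le e^{\epsilon^2/2}$, and we conclude $\mathbb P(T>\epsilon)\le e^{-(d-2)\epsilon^2/2}$, in fact with room to spare. In the complementary regime $\epsilon^2(d-2)<\epsilon^2(d-1)<1/(2\pi)$, the right-hand side of \eqref{eq:cap-bound} already exceeds $2e^{-1/(4\pi)}>1$, so the bound holds trivially.

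Combining the two regimes with the symmetry factor $2$ yields \eqref{eq:cap-bound}, and specializing to $\mathbf v=\widehat{\mathbf h}$ yields \eqref{eq:band-complement-measure}. If the Gautschi constant turns out to be delicate for small $d$, we would check $d=3,4$ directly: for $d=3$, $T$ is uniform on $[-1,1]$, so $\mathbb P(|T|>\epsilon)=1-\epsilon\le 2e^{-\epsilon^2/2}$. Alternatively, we would fall back on the standard spherical cap bound $\sigma(\{u_1>\epsilon\})\le e^{-d\epsilon^2/2}$, which is stronger than the stated bound.
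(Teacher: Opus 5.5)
Your proof is correct, but it takes a different route from the paper's.

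\textbf{The paper's route.} The paper never uses the marginal density. It writes $\mathbf U=\mathbf G/\|\mathbf G\|_2$ and rewrites $|U_1|>\epsilon$ as $G_1^2>aY$, with $a=\epsilon^2/(1-\epsilon^2)$ and $Y\sim\chi^2_{d-1}$ independent of $G_1$. It then conditions on $Y$ to apply $\mathbb P(|G_1|>s)\le 2e^{-s^2/2}$. Finally it evaluates $\mathbb E[e^{-aY/2}]=(1+a)^{-(d-1)/2}=(1-\epsilon^2)^{(d-1)/2}$ exactly. This gives $2(1-\epsilon^2)^{(d-1)/2}$ for all $\epsilon$ and $d$ at once, with no Gamma-function estimates and no case split.

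\textbf{Your route.} You use the density $c_d(1-t^2)^{(d-3)/2}$, insert the factor $t/\epsilon$ to make the tail integral exact, bound $c_d$, and then split into two regimes. Each step checks out:
\begin{itemize}
\item The lower half of Gautschi's inequality, $x^{1/2}<\Gamma(x+1)/\Gamma(x+\tfrac12)$ with $x=(d-1)/2$, gives exactly $c_d<\sqrt{(d-1)/(2\pi)}$ for every $d\ge 3$. So the constant is not delicate, and your small-$d$ fallbacks are unnecessary.
\item In the regime $\epsilon^2(d-1)<1/(2\pi)$, the right-hand side is indeed at least $2e^{-1/(4\pi)}>1$, so that case is trivial.
\item Your exploratory first attempt with exponent $d-3$ can simply be deleted from the write-up.
\end{itemize}

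\textbf{What each buys.} Your approach gives a sharper one-sided tail, $\frac{1}{\epsilon\sqrt{2\pi(d-1)}}(1-\epsilon^2)^{(d-1)/2}$. It keeps the Gaussian-type $1/(\epsilon\sqrt{d})$ prefactor that the paper's crude Gaussian tail bound throws away. The cost is a Gamma-ratio inequality and a case analysis for the regime where that prefactor exceeds $1$. The paper's argument is shorter and uniform in $\epsilon$ and $d$. Both arguments actually establish the bound with exponent $d-1$ and only weaken it to $d-2$ at the last step.
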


\begin{proof}
We provide a self-contained derivation.

\textbf{Step 1 (Gaussian representation of a uniform sphere point).}
Let $\mathbf{G}=(G_1,\ldots,G_d)$ with independent $G_i\sim\mathcal N(0,1)$ and define
$\mathbf{U}:=\mathbf{G}/\|\mathbf{G}\|_2$. By rotational invariance of the standard Gaussian,
$\mathbf{U}$ is uniformly distributed on $\mathbb S^{d-1}_1$.

\textbf{Step 2 (Reduction to the first coordinate).}
By rotational symmetry, we may assume $\mathbf{v}=\mathbf{e}_1$.
Then $\langle \mathbf{U},\mathbf{v}\rangle=U_1=G_1/\|\mathbf{G}\|_2$ and
\[
\mathbb{P}\!\big(|\langle \mathbf{U},\mathbf{v}\rangle|>\epsilon\big)
=
\mathbb{P}\!\Big(\frac{|G_1|}{\sqrt{G_1^2+\sum_{i=2}^d G_i^2}}>\epsilon\Big).
\]

\textbf{Step 3 (Rewrite using a chi-square variable).}
Let $Y:=\sum_{i=2}^d G_i^2\sim\chi^2_{d-1}$, independent of $G_1$.
The event $|U_1|>\epsilon$ is equivalent to
\[
G_1^2>\epsilon^2(G_1^2+Y)
\iff
(1-\epsilon^2)G_1^2>\epsilon^2 Y
\iff
G_1^2>aY,
\qquad
a:=\frac{\epsilon^2}{1-\epsilon^2}.
\]

\textbf{Step 4 (Condition on $Y$ and apply a Gaussian tail bound).}
For any $y>0$,
\[
\mathbb{P}(G_1^2>ay\mid Y=y)
=
\mathbb{P}(|G_1|>\sqrt{ay})
\le
2\exp\!\Big(-\frac{ay}{2}\Big),
\]
using $\mathbb P(|Z|>t)\le 2e^{-t^2/2}$ for $Z\sim\mathcal N(0,1)$.
Taking expectation over $Y$ yields
\[
\mathbb{P}(G_1^2>aY)
\le
2\,\mathbb{E}\big[e^{-aY/2}\big].
\]

\textbf{Step 5 (Compute $\mathbb{E}[e^{-tY}]$ for $Y\sim\chi^2_{d-1}$).}
Since $Y=\sum_{i=2}^d G_i^2$ with independent $G_i\sim\mathcal N(0,1)$,
for $t\ge 0$,
\[
\mathbb{E}[e^{-tY}]
=
\prod_{i=2}^d \mathbb{E}[e^{-tG_i^2}].
\]
A direct one-dimensional calculation gives, for $Z\sim\mathcal N(0,1)$,
\[
\mathbb{E}[e^{-tZ^2}]
=
\frac{1}{\sqrt{2\pi}}\int_{\mathbb R}\exp\!\big(-(1/2+t)z^2\big)\,dz
=
(1+2t)^{-1/2}.
\]
Hence $\mathbb{E}[e^{-tY}]=(1+2t)^{-(d-1)/2}$, and with $t=a/2$,
\[
\mathbb{E}[e^{-aY/2}]
=
(1+a)^{-(d-1)/2}
=
(1-\epsilon^2)^{(d-1)/2}.
\]
Therefore,
\[
\mathbb{P}\!\big(|\langle \mathbf{U},\mathbf{v}\rangle|>\epsilon\big)
\le
2(1-\epsilon^2)^{(d-1)/2}.
\]

\textbf{Step 6 (Convert to an exponential bound).}
Using $\log(1-x)\le -x$ for $x\in(0,1)$,
\[
(1-\epsilon^2)^{(d-1)/2}
=
\exp\!\Big(\frac{d-1}{2}\log(1-\epsilon^2)\Big)
\le
\exp\!\Big(-\frac{(d-1)\epsilon^2}{2}\Big)
\le
\exp\!\Big(-\frac{(d-2)\epsilon^2}{2}\Big),
\]
which proves \eqref{eq:cap-bound}. Taking $\mathbf{v}=\widehat{\mathbf{h}}$
yields \eqref{eq:band-complement-measure}.
\end{proof}

\begin{proof}[Proof of Theorem 2.5]
By definition,
$\mathcal N^{\mathrm{dir}}_{\delta,\epsilon}(\mathbf{h})
=
\Omega_\delta(\mathbf{h})\cap \mathcal B_\epsilon(\mathbf{h})$.
Using the decomposition
\[
\sigma_{d-1}\!\big(\Omega_\delta(\mathbf{h})\cap \mathcal B_\epsilon(\mathbf{h})\big)
=
\sigma_{d-1}\!\big(\Omega_\delta(\mathbf{h})\big)
-
\sigma_{d-1}\!\big(\Omega_\delta(\mathbf{h})\cap \mathcal B_\epsilon(\mathbf{h})^c\big),
\]
we obtain
\begin{equation}
\sigma_{d-1}\!\big(\mathcal N^{\mathrm{dir}}_{\delta,\epsilon}(\mathbf{h})\big)
=
\alpha_\delta(\mathbf{h})
-
\sigma_{d-1}\!\big(\Omega_\delta(\mathbf{h})\cap \mathcal B_\epsilon(\mathbf{h})^c\big).
\label{eq:thm25-decomp}
\end{equation}
Since $\Omega_\delta(\mathbf{h})\cap \mathcal B_\epsilon(\mathbf{h})^c\subseteq \mathcal B_\epsilon(\mathbf{h})^c$,
\[
\sigma_{d-1}\!\big(\Omega_\delta(\mathbf{h})\cap \mathcal B_\epsilon(\mathbf{h})^c\big)
\le
\sigma_{d-1}\!\big(\mathcal B_\epsilon(\mathbf{h})^c\big).
\]
Applying Lemma~\ref{lem:band-complement} (with $\mathbf{v}=\widehat{\mathbf{h}}$) gives
\[
\sigma_{d-1}\!\big(\mathcal B_\epsilon(\mathbf{h})^c\big)
\le
2\exp\!\Big(-\frac{(d-2)\epsilon^2}{2}\Big).
\]
Substituting into \eqref{eq:thm25-decomp} yields
\[
\sigma_{d-1}\!\big(\mathcal N^{\mathrm{dir}}_{\delta,\epsilon}(\mathbf{h})\big)
\ge
\alpha_\delta(\mathbf{h})-2\exp\!\Big(-\frac{(d-2)\epsilon^2}{2}\Big),
\]
which matches Equation~(11).

If Equation~(10) holds, then the right-hand side above is strictly positive, hence
$\sigma_{d-1}(\mathcal N^{\mathrm{dir}}_{\delta,\epsilon}(\mathbf{h}))>0$ and thus
$\mathcal N^{\mathrm{dir}}_{\delta,\epsilon}(\mathbf{h})\neq\emptyset$.
Finally, by Equation~\eqref{eq:ndir_def},
$\mathcal N_{\delta,\epsilon}(\mathbf{h})=\{r\mathbf{u}:\mathbf{u}\in\mathcal N^{\mathrm{dir}}_{\delta,\epsilon}(\mathbf{h})\}$
is non-empty whenever $\mathcal N^{\mathrm{dir}}_{\delta,\epsilon}(\mathbf{h})$ is non-empty.
\end{proof}

\begin{proof}[Proof of Corollary 2.6]
Using $\alpha_\delta(\mathbf{h})=\sigma_{d-1}(\Omega_\delta(\mathbf{h}))$ (Equation~\eqref{eq:alpha_def}) and
$\mathcal N^{\mathrm{dir}}_{\delta,\epsilon}(\mathbf{h})=\Omega_\delta(\mathbf{h})\cap\mathcal B_\epsilon(\mathbf{h})$
(Equation~\eqref{eq:ndir_def}), we have
\begin{align*}
\alpha_\delta(\mathbf{h})
-\sigma_{d-1}\!\big(\mathcal N^{\mathrm{dir}}_{\delta,\epsilon}(\mathbf{h})\big)
&=
\sigma_{d-1}\!\big(\Omega_\delta(\mathbf{h})\big)
-
\sigma_{d-1}\!\big(\Omega_\delta(\mathbf{h})\cap \mathcal B_\epsilon(\mathbf{h})\big)\\
&=
\sigma_{d-1}\!\big(\Omega_\delta(\mathbf{h})\cap \mathcal B_\epsilon(\mathbf{h})^c\big)
\;\ge\;0,
\end{align*}
which gives the left inequality in Equation~(12).
Moreover, $\Omega_\delta(\mathbf{h})\cap \mathcal B_\epsilon(\mathbf{h})^c \subseteq \mathcal B_\epsilon(\mathbf{h})^c$, hence
\[
\alpha_\delta(\mathbf{h})
-\sigma_{d-1}\!\big(\mathcal N^{\mathrm{dir}}_{\delta,\epsilon}(\mathbf{h})\big)
\le
\sigma_{d-1}\!\big(\mathcal B_\epsilon(\mathbf{h})^c\big)
\le
2\exp\!\Big(-\frac{(d-2)\epsilon^2}{2}\Big),
\]
where the last inequality follows from Lemma~\ref{lem:band-complement}. This proves Equation~(12).
Since the upper bound decays exponentially in $d$ for any fixed $\epsilon\in(0,1)$,
the gap vanishes exponentially fast as $d\to\infty$.
\end{proof}

\section{Defense Against Attacks}
\label{app:defense_against_attacks}

To ensure reproducibility and rigorous evaluation, we detail the specific configurations for the attack scenarios used in Section~\ref{sec:defense_results}. All experiments were conducted with a fixed random seed (42).

\subsection{KNN Attack Setup}
This attack evaluates the risk of recovering the exact private input from the perturbed embedding using a K-Nearest Neighbors~\cite{10.1145/3372297.3417270} approach.
\begin{itemize}
    \item \textbf{Dataset Source:} tatsu-lab/alpaca~\cite{alpaca} (Instruction-Following Dataset).
    \item \textbf{Data Construction:} The target text is generated by concatenating the instruction and input fields.
    \item \textbf{Filtering \& Sampling:} We filter samples to retain texts with lengths between 20 and 512 characters. From the filtered pool, we randomly sample 1,000 instances for testing.
\end{itemize}

\subsection{Vocabulary-Matching Attack Setup}
This attack~\cite{pal2025hidden} assesses lexical leakage by projecting hidden states to the vocabulary space. We construct a challenging composite corpus to test leakage across diverse domains (instruction following, scientific reasoning, and comprehensive knowledge).

\textbf{Dataset Composition:}
The composite dataset integrates three distinct sources:
\begin{itemize}
    \item \textbf{Alpaca:} Sourced from tatsu-lab/alpaca. Preprocessed by concatenating instruction and input fields.
    \item \textbf{Science QA Bundle:} A compilation of scientific question-answering datasets:
    \begin{itemize}
        \item ARC (AI2 Reasoning Challenge): Sourced from allenai/ai2\_arc, including both ARC-Challenge and ARC-Easy subsets.
        \item SciQ: Sourced from allenai/sciq. Formatted as ``Scientific Question + 4 Options (1 correct + 3 distractors)''. The order of options is randomly shuffled to increase complexity.
    \end{itemize}
    \item \textbf{WikiText-103:} Sourced from wikitext-103-raw-v1. We filter for high-quality articles with length $>50$ characters. To maintain balance, we cap the maximum sampling from this source at 150,000 entries.
\end{itemize}

\textbf{Processing \& Sampling:}
\begin{itemize}
    \item \textbf{Pool Construction:} The datasets are merged into a candidate pool capped at 300,000 samples.
    \item \textbf{Final Filtering:} We apply a length filter of 20--512 characters to the combined pool.
    \item \textbf{Test Set Construction:} From the filtered pool, we randomly sample 200 distinct instances. To evaluate robustness against key variations, we generate 5 independent perturbations for each instance using different random keys. This yields a total of 1,000 perturbed representations per model for the attack evaluation.
    \item \textbf{Stop Words Definition:} To compute the Clean ASR metric, we define non-content tokens using the standard stop words corpus from the NLTK library.
\end{itemize}

\subsection{Results}
\label{app:attack_results}

Following the experimental setup described above, we present the quantitative performance of OSNIP compared to baseline defense mechanisms.

\subsubsection{Robustness Against KNN Attack}
Table~\ref{tab:knn_attack} presents the performance of different defense mechanisms against KNN-based embedding inversion. The results reveal a sharp contrast in defense capabilities. Baseline methods (Cape, DYNText, InferDPT) remain highly vulnerable, exhibiting Top-10 Attack Success Rates (ASR) ranging from 30\% to over 80\%. This vulnerability stems from their reliance on token substitution or noise injection that is constrained within a local semantic neighborhood to preserve utility. Consequently, the perturbed embeddings fail to escape the search radius of distance-based inversion attacks.

In contrast, OSNIP achieves a \textbf{near-zero ASR} across all tested models. This empirical evidence validates our geometric premise: by optimizing the perturbation $z$ within the high-dimensional obfuscated semantic null space, OSNIP effectively minimizes the geometric similarity between the private embedding and the original input. To an attacker relying on Euclidean or Cosine distance (as KNN does), the private embedding effectively ``disappears'' from the search space of the original token.

\begin{table}[h]
\caption{Defense against KNN attack. Attack Success Rates (ASR) (\%) at Top-1/5/10 levels. Lower is better.}
\label{tab:knn_attack}
\centering
\begin{small} 
\renewcommand{\arraystretch}{1.2} 
\setlength{\tabcolsep}{0pt} 

\begin{tabular*}{\linewidth}{@{\extracolsep{\fill}}llcccc}
\toprule
\multirow{2}{*}{\textbf{Model}} & \multirow{2}{*}{\textbf{Method}} & \multirow{2}{*}{\textbf{Conf.}} & \multicolumn{3}{c}{\textbf{ASR (\%) $\downarrow$}} \\
\cmidrule(l){4-6}
 & & & Top-1 & Top-5 & Top-10 \\
\midrule
\multirow{4}{*}{Qwen3-32B} 
 & Cape & $\epsilon$=14 & 32.35 & 32.47 & 32.48 \\
 & DYNText & $\epsilon$=2.5 & 76.07 & 76.13 & 76.13 \\
 & InferDPT & $\epsilon$=6 & 62.11 & 62.18 & 62.19 \\
 & \textbf{OSNIP} & - & \textbf{0.00} & \textbf{0.00} & \textbf{0.00} \\
\midrule
\multirow{4}{*}{Qwen3-14B} 
 & Cape & $\epsilon$=14 & 32.96 & 33.31 & 33.43 \\
 & DYNText & $\epsilon$=2.5 & 74.32 & 74.48 & 74.54 \\
 & InferDPT & $\epsilon$=6 & 61.33 & 61.49 & 61.55 \\
 & \textbf{OSNIP} & - & \textbf{0.00} & \textbf{0.00} & \textbf{0.00} \\
\midrule
\multirow{4}{*}{Llama-3.2-3B-Instruct} 
 & Cape & $\epsilon$=14 & 33.56 & 38.64 & 40.37 \\
 & DYNText & $\epsilon$=2.5 & 80.04 & 80.45 & 80.70 \\
 & InferDPT & $\epsilon$=6 & 66.13 & 66.48 & 66.67 \\
 & \textbf{OSNIP} & - & \textbf{1.92} & \textbf{2.10} & \textbf{2.12} \\
\midrule
\multirow{4}{*}{Llama-3.2-1B} 
 & Cape & $\epsilon$=14 & 33.40 & 39.17 & 40.97 \\
 & DYNText & $\epsilon$=2.5 & 79.38 & 79.77 & 80.20 \\
 & InferDPT & $\epsilon$=6 & 65.38 & 65.78 & 66.05 \\
 & \textbf{OSNIP} & - & \textbf{5.17} & \textbf{6.37} & \textbf{6.61} \\
\bottomrule
\end{tabular*}
\end{small}
\end{table}

\begin{table}[h]
\caption{Layer-wise Vocabulary-Matching Attack. Values represent Attack Success Rates (ASR) (\%). \textbf{Total}: all tokens; \textbf{Clean}: excluding stop words. (-) denotes layer not present.}
\label{tab:vocab_attack}
\centering
\begin{small} 
\renewcommand{\arraystretch}{1.2}
\setlength{\tabcolsep}{0pt} 

\begin{tabular*}{\linewidth}{@{\extracolsep{\fill}}llccccccc}
\toprule
\multirow{2}{*}{\textbf{Model}} & \multirow{2}{*}{\textbf{Metric}} & \multicolumn{7}{c}{\textbf{Layer Depth}} \\
\cmidrule(l){3-9}
 & & L1 & L5 & L10 & L15 & L20 & L30 & L39 \\
\midrule
\multirow{2}{*}{Qwen3-32B} 
 & Total & 48.05 & 26.60 & 18.76 & 22.82 & 31.80 & 21.96 & 27.76 \\
 & \textbf{Clean} & \textbf{17.62} & \textbf{10.26} & \textbf{7.23} & \textbf{9.09} & \textbf{11.05} & \textbf{9.22} & \textbf{11.10} \\
\midrule
\multirow{2}{*}{Qwen3-14B} 
 & Total & 38.58 & 22.35 & 19.52 & 26.94 & 26.75 & 13.46 & 18.12 \\
 & \textbf{Clean} & \textbf{14.12} & \textbf{8.21} & \textbf{8.78} & \textbf{10.64} & \textbf{10.76} & \textbf{5.93} & \textbf{5.47} \\
\midrule
\multirow{2}{*}{Llama-3.2-3B-Instruct} 
 & Total & 37.32 & 45.43 & 32.76 & 24.14 & 37.68 & - & - \\
 & \textbf{Clean} & \textbf{11.09} & \textbf{16.23} & \textbf{11.16} & \textbf{8.24} & \textbf{14.43} & - & - \\
\midrule
\multirow{2}{*}{Llama-3.2-1B} 
 & Total & 40.67 & 49.51 & 19.26 & 29.90 & - & - & - \\
 & \textbf{Clean} & \textbf{11.27} & \textbf{18.26} & \textbf{5.54} & \textbf{8.12} & - & - & - \\
\bottomrule
\end{tabular*}
\end{small}
\end{table}

\subsubsection{Robustness Against Vocabulary-Matching Attack}
Table~\ref{tab:vocab_attack} provides a deeper layer-wise analysis of token leakage. While the ``Total ASR'' might initially suggest partial leakage, decomposing this metric into ``Clean ASR'' reveals the true nature of the recovered information. The significant drop from Total to Clean ASR demonstrates that the majority of recovered tokens are high-frequency stop words (e.g., ``the'', ``is'', ``and''), which carry negligible semantic density.

Crucially, OSNIP suppresses the recovery of privacy-sensitive tokens (Clean ASR) to consistently low levels across all models. For instance, on Qwen3-14B, the Clean ASR drops to as low as 5.47\% in deeper layers. This confirms that OSNIP successfully decouples the geometric representation from the underlying semantics, preventing attackers from reconstructing the substantive content of the user's prompt even if they can get functional syntax.

\section{More Experiments}

\subsection{Ablation Studies}

\textbf{Efficiency of Key-Conditioned Personalization.} 
As presented in Table~\ref{tab:ablation_studies}, we assess the necessity of the key-conditioned perturbation. 
The variant without the key mechanism fails catastrophically against adaptive attacks, as the static projection is easily inverted. 
In contrast, introducing the stochastic key mechanism effectively decouples security from the model parameters. 
Even with full access to the encryptor weights, the attacker's inability to derive the correct high-dimensional key results in an ASR drop to 0.00\%, validating that the key is the foundational source of our white-box security.

\begin{figure}[t]
    \centering
    \includegraphics[width=0.5\linewidth]{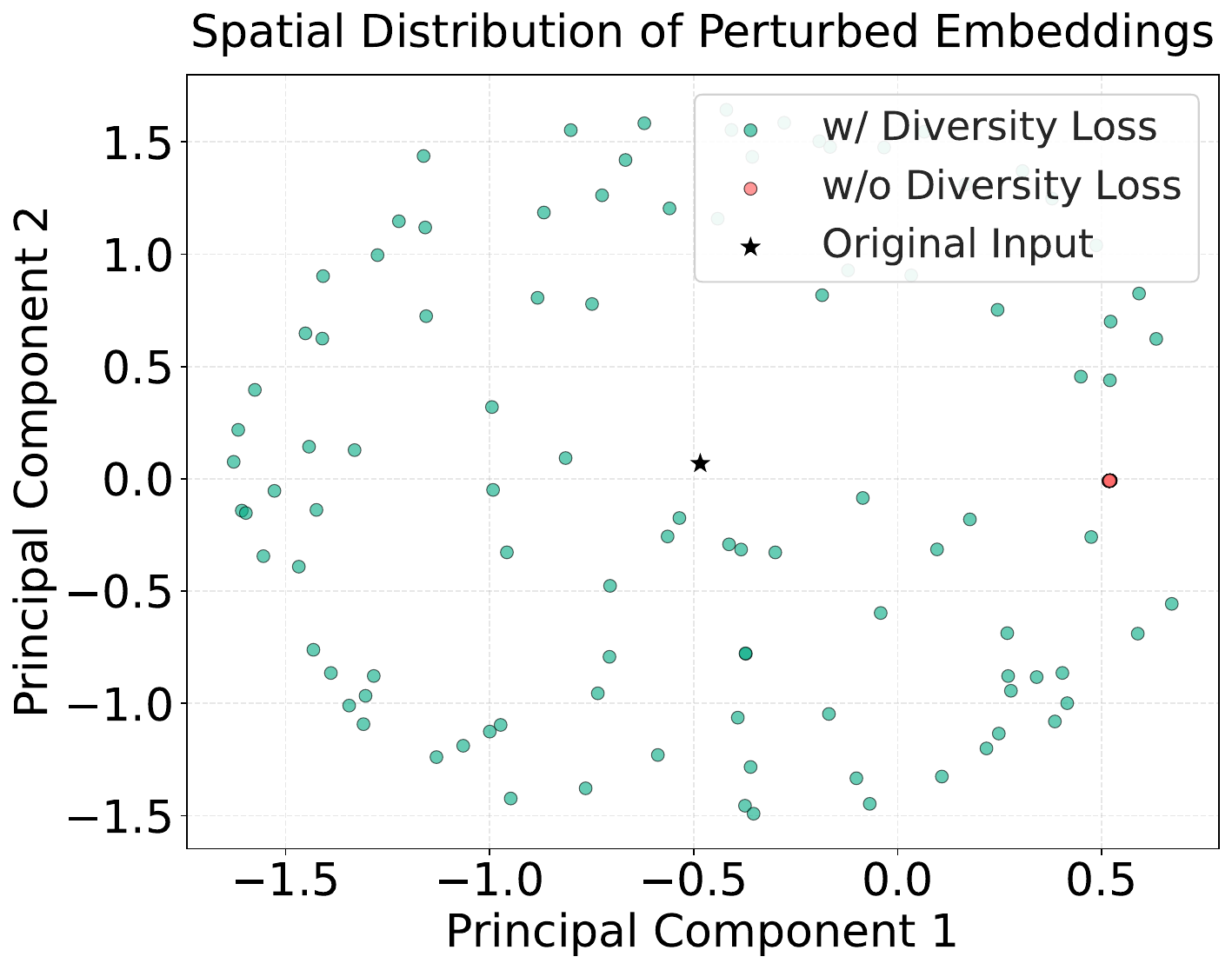} 
    \caption{Visualization of embedding distribution. The plot shows the spatial distribution of embeddings projected onto the first two principal components. The black star denotes the original input representation. The single red dot indicates the perturbed embedding generated without the Diversity Loss ($\mathcal{L}_\text{div}$). The green dots represent the distribution of perturbations generated by the full method across multiple random keys.}
    \label{fig:pca_visualization}
\end{figure}

\textbf{Necessity of Diversity Loss.} 
We further analyze the role of the diversity loss ($\mathcal{L}_\text{div}$). 
As shown in Table~\ref{tab:ablation_studies}, removing this loss causes the model to ignore the key, reverting to a deterministic mapping that remains vulnerable to white-box inversion. 
This phenomenon is visually demonstrated in Figure~\ref{fig:pca_visualization}, where embeddings trained without $\mathcal{L}_\text{div}$ collapse to a single point (Red dot). 
Conversely, our full method forces the embeddings to span a high-entropy distribution (Green cloud) within the obfuscated semantic null space, which is critical for enforcing the one-to-many mapping required for privacy.

\begin{table}[t]
\centering
\caption{Ablation Studies on Qwen3-32B. Comparison of Attack Success Rates (ASR) across variants. Static: Black-box setting. Adaptive: White-box setting. Rand: Attacker uses a mismatched key. Oracle: Attacker uses the correct key.}
\label{tab:ablation_studies}
\begin{sc}
\begin{small}
\setlength{\tabcolsep}{5.0pt}
\begin{tabular}{lccccc}
\toprule
& & \textbf{Static} & \multicolumn{2}{c}{\textbf{Adaptive}} \\
\cmidrule(lr){3-3} \cmidrule(lr){4-5}
\textbf{Model} & $\mathcal{L}_\mathrm{div}$ & \textbf{Rand} & \textbf{Rand} & \textbf{Oracle} \\
\midrule
No Key & N/A & 0.01\% & \textbf{100.00\%} & -- \\
w/o $\mathcal{L}_\mathrm{div}$ & $\times$ & 0.00\% & \textbf{100.00\%} & 100.00\% \\
\textbf{OSNIP} & $\checkmark$ & 0.00\% & \textbf{0.00\%} & 100.00\% \\
\bottomrule
\end{tabular}
\end{small}
\end{sc}
\end{table}

\subsection{Relationship between Cosine Similarity and ASR}
%
We conducted an empirical study to quantify the correlation between geometric alignment and ASR.

\textbf{Setup.} We sampled 1,000 prompts from the evaluation dataset and injected varying magnitudes of noise to generate perturbed vectors $z$ spanning a range of cosine similarities $[0, 1]$ relative to the original embeddings $h$. We then executed attacks in Section~\ref{sec:defense_results}.

\textbf{Results.} As shown in Figure~\ref{fig:cos_vs_asr1}, we observe a strong consistent positive correlation between ASR and cosine similarity for both attack methods. Notably, even though the vocabulary-matching attack utilizes $L1$ distance, its success rate collapses concurrently with the decrease in cosine similarity. Specifically, when the cosine similarity falls into the typical range of the obfuscated semantic null space (e.g., $\cos < 0.3$), the ASR for both attacks vanishes to near-zero (random guessing level).

This empirically confirms our theoretical premise: regardless of the specific distance metric employed by the adversary, privacy is effectively enforced by minimizing cosine similarity.

\begin{figure}[h]
    \centering
    \includegraphics[width=\linewidth]{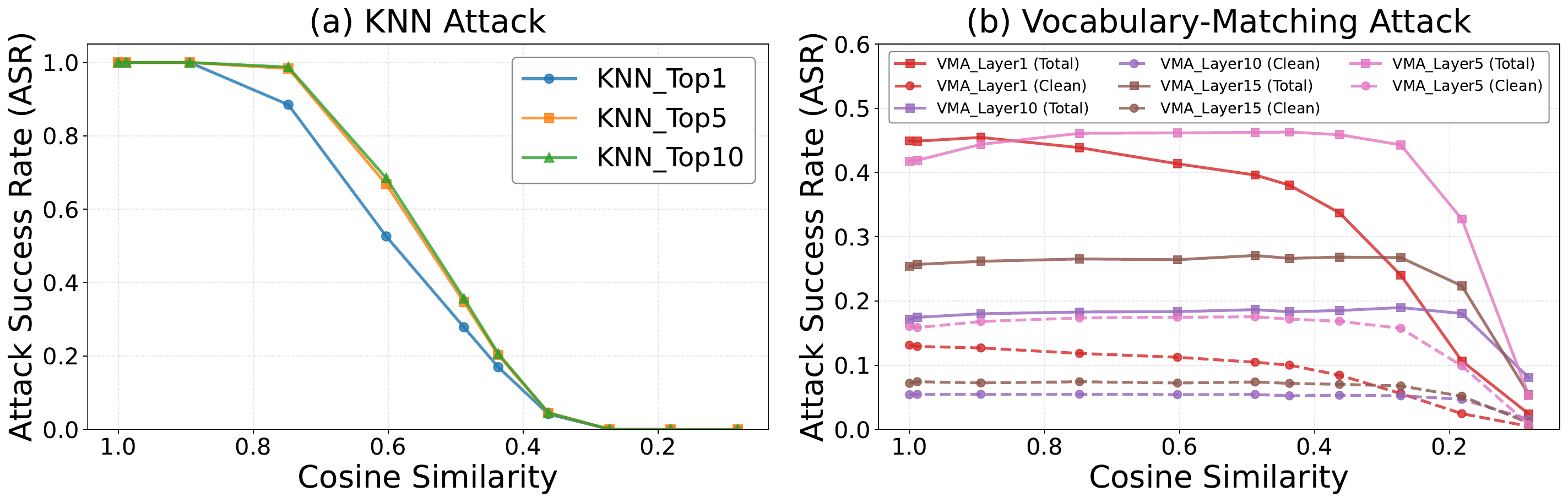}
    \caption{Correlation between Cosine Similarity and Attack Success Rate. The breakdown of geometric alignment leads to the decay of attack efficacy for both KNN ($L_2$) and vocabulary-matching ($L_1$) adversaries.}
    \label{fig:cos_vs_asr1}
\end{figure}

\subsection{Mechanistic Visualization}
\label{app:mechanistic_setup}

In Section~\ref{para:further_analysis_mechanism}, we visualized the internal activation trajectories to demonstrate the ``Obfuscation-and-Recover'' phenomenon. Here, we detail the specific experimental configuration used to generate these visualizations. The full heatmaps are shown in Figure~\ref{fig:heatmap_trajectory}.

\begin{figure*}[t]
\centering
\includegraphics[width=0.9\linewidth]{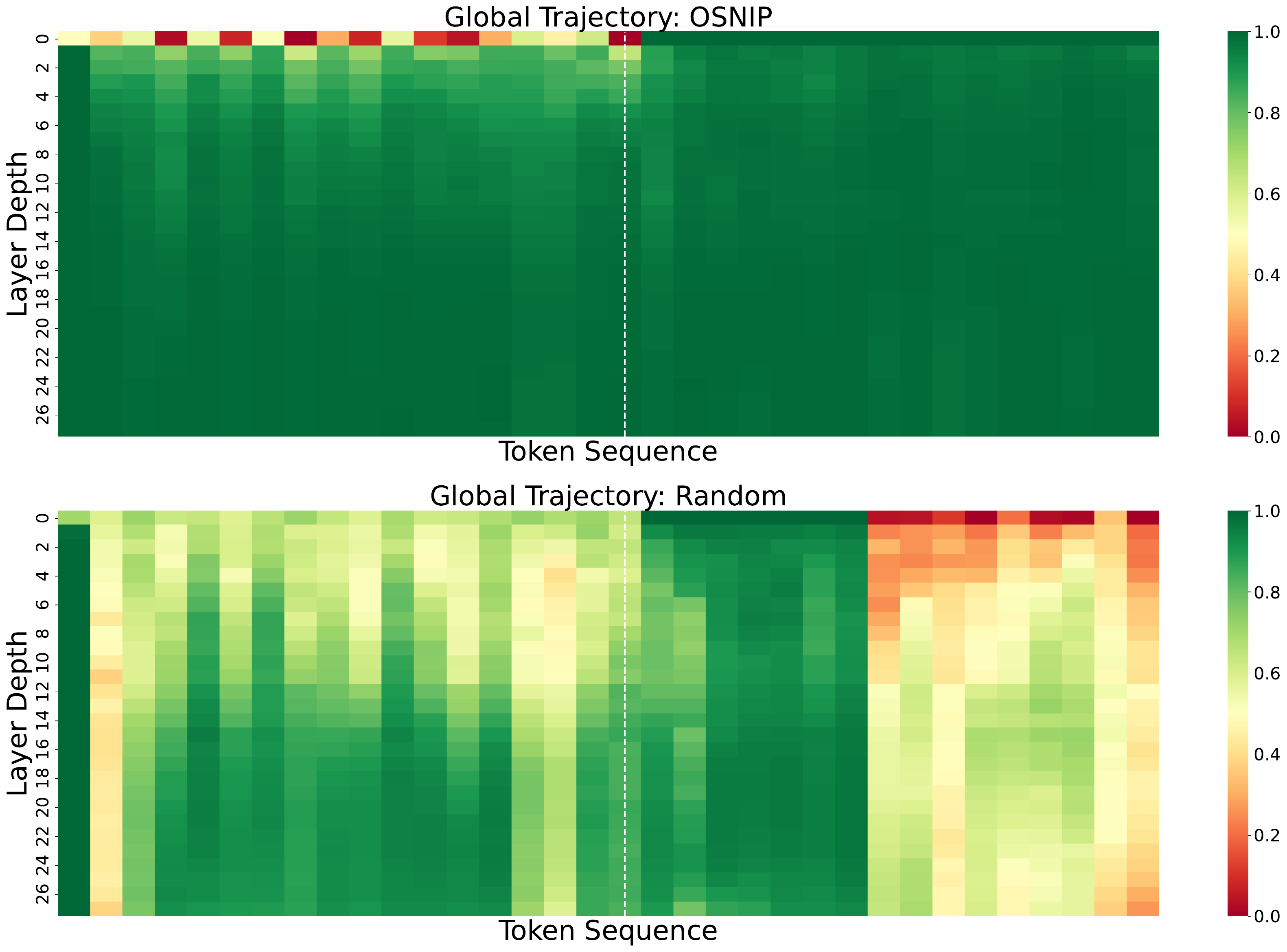} 
\caption{Visualizing the ``Obfuscation-and-Recover'' trajectory. The heatmaps display the layer-wise cosine similarity between the perturbed and clean hidden states across the transformer layers (y-axis) and token sequence (x-axis). Top (Ours): The OSNIP encryptor induces low similarity in shallow layers but allows rapid convergence to the clean state in deeper layers, ensuring coherent text generation. Bottom (Random Baseline): In contrast, isotropic random noise of the same magnitude fails to achieve consistent recovery, leading to state degradation during generation.}\label{fig:heatmap_trajectory}
\end{figure*}

\subsubsection{Experimental Setup}
\textbf{Model \& Architecture.} We utilize the Llama-3.2-3B-Instruct model equipped with our trained encryptor. The analysis is conducted during inference, where noise is injected exclusively at the input embedding layer and propagates through the frozen model weights.

\textbf{Input Specification.} The test input consists of a representative instruction prompt, followed by 16 tokens generated via greedy decoding. This allows us to observe behavior in both the prompt processing phase and the generation phase.

\subsubsection{Comparison Groups}
To rigorously isolate the effect of the learned noise structure, we compare three distinct conditions:
\begin{enumerate}
    \item \textbf{Clean Baseline:} Standard inference with original embeddings and no noise injection.
    \item \textbf{OSNIP Perturbation:} Inference with noise generated by our trained encryptor.
    \item \textbf{Random Noise Control:} Inference with Gaussian noise. Crucially, to ensure a fair comparison, we align the magnitude of the random noise to match that of the OSNIP perturbation. The random noise vector $\mathbf{n}_{\text{random}}$ is formulated as:
    \begin{equation}
        \mathbf{n}_{\text{random}} = \frac{\mathbf{z}}{\|\mathbf{z}\|_2} \cdot \|\mathbf{n}_{\text{encryptor}}\|_2, \quad \text{where } \mathbf{z} \sim \mathcal{N}(0, \mathbf{I})
    \end{equation}
    This ensures that any performance difference is attributable solely to the \textit{direction} of the noise, rather than its \textit{magnitude}.
\end{enumerate}

\subsubsection{Analysis Methodology}
We employ a fine-grained activation analysis using forward hooks registered at critical components within each transformer layer.

\begin{itemize}
    \item \textbf{Probing Points:} We track activations at 7 key components per layer: Input Norm, Q/K/V Projections, Attention Output, Post-Attention Norm, and MLP Output.
    
    \item \textbf{Metric Interpretation:} We compute the layer-wise cosine similarity between the \textit{clean} hidden states and the \textit{noisy} hidden states. The efficacy is interpreted as follows:
    \begin{itemize}
        \item Layer 0 (Input Embedding): Low similarity (indicated by deep red) signifies effective geometric perturbation, ensuring privacy protection at the input level.
        \item Deep Layers: High similarity (indicated by green) demonstrates that the model has successfully recovered the original semantics, ensuring utility preservation.
    \end{itemize}
    
    \item \textbf{Granularity:} The analysis is split into two phases to observe different aspects of utility:
    \begin{itemize}
        \item Prompt Phase (Context Understanding): Tracks the processing of input tokens. This phase visualizes the recovery trajectory—how the model transforms the perturbed input (Layer 0) into understood semantic features (Deep Layers).
        \item Generation Phase (Reasoning Stability): Tracks the autoregressive generation of new tokens. High similarity in this phase confirms that the model's reasoning path remains aligned with the clean baseline, producing coherent output.
    \end{itemize}
\end{itemize}

\subsection{Convergence Analysis}
\label{app:trend_analysis}
The training dynamics are governed by the composite objective function defined in Equation~\ref{eq:total_loss}:
\begin{equation}
    \mathcal{L}_\text{total} = \mathcal{L}_\text{util} + \lambda_1 \mathcal{L}_\text{priv} + \lambda_2 \mathcal{L}_\text{div}
    \label{eq:total_loss}
\end{equation}
Across all architectures, we observe a consistent two-phase convergence pattern driven by the interplay between these conflicting objectives:

\begin{itemize}
\item \textbf{Phase 1: Privacy Injection.} Initially, the trajectory of $\mathcal{L}_\text{total}$ exhibits a counter-intuitive upward trend. This occurs because the optimizer prioritizes minimizing the weighted penalty terms $\lambda_1 \mathcal{L}_\text{priv}$ and $\lambda_2 \mathcal{L}_\text{div}$ to satisfy the privacy and diversity constraints. The encryptor aggressively injects perturbation to push embeddings away from their original positions, which inevitably disrupts semantic fidelity, causing the utility term $\mathcal{L}_\text{util}$ to spike. This phase represents the model actively ``searching'' for the obfuscated semantic null space by sacrificing utility for privacy compliance.
    
\item \textbf{Phase 2: Semantic Recovery.} Once the privacy and diversity metrics satisfy their respective hinge thresholds, their contributions to the gradient diminish. The optimization focus then shifts entirely to $\mathcal{L}_\text{util}$. In this phase, the encryptor refines the perturbed embeddings within the obfuscated semantic null space. Consequently, $\mathcal{L}_\text{util}$ gradually decreases while maintaining the privacy constraints, driving $\mathcal{L}_\text{total}$ down to a stable convergence point.
\end{itemize}





\begin{figure}[h]
    \centering
    
    \begin{subfigure}{\linewidth}
        \centering
        \includegraphics[width=0.98\linewidth]{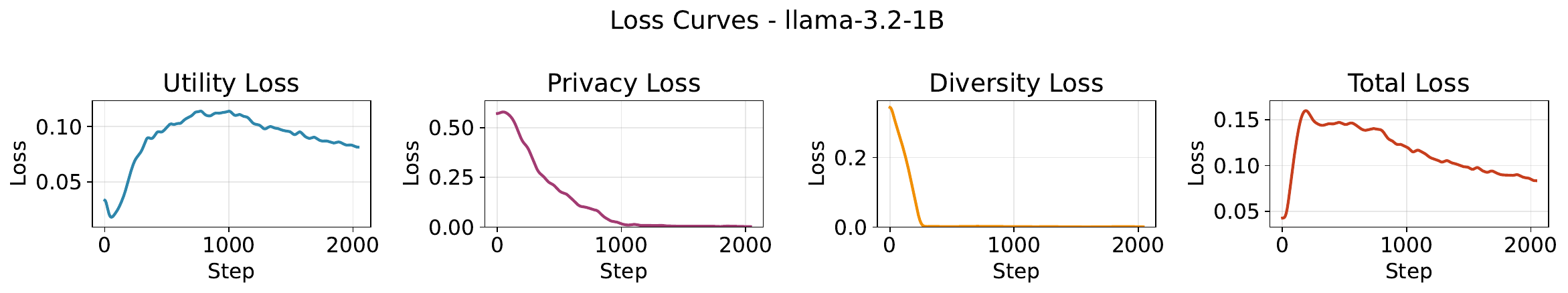}
        \caption{Llama-3.2-1B}
        \label{fig:loss_llama_1b}
    \end{subfigure}
    
    \vspace{1.0em} 
    
    \begin{subfigure}{\linewidth}
        \centering
        \includegraphics[width=0.98\linewidth]{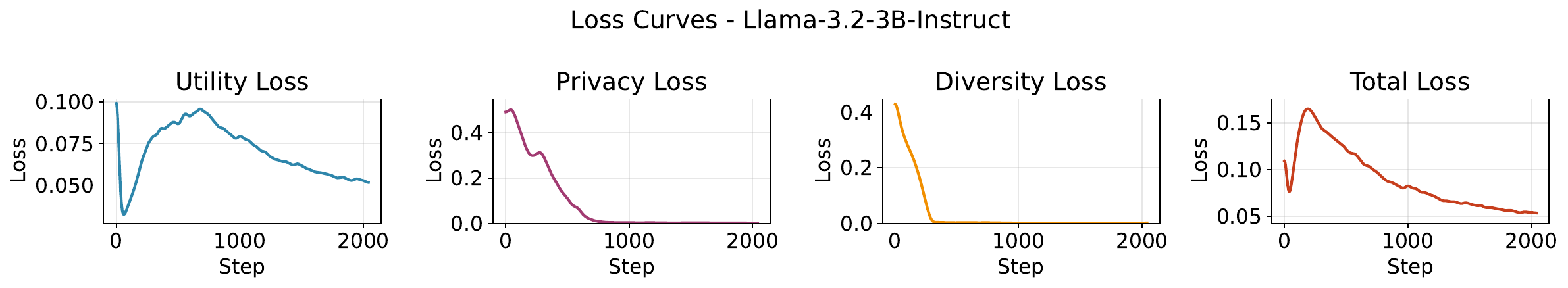}
        \caption{Llama-3.2-3B-Instruct}
        \label{fig:loss_llama_3b_instruct}
    \end{subfigure}
    
    \vspace{1.0em}
    
    \begin{subfigure}{\linewidth}
        \centering
        \includegraphics[width=0.98\linewidth]{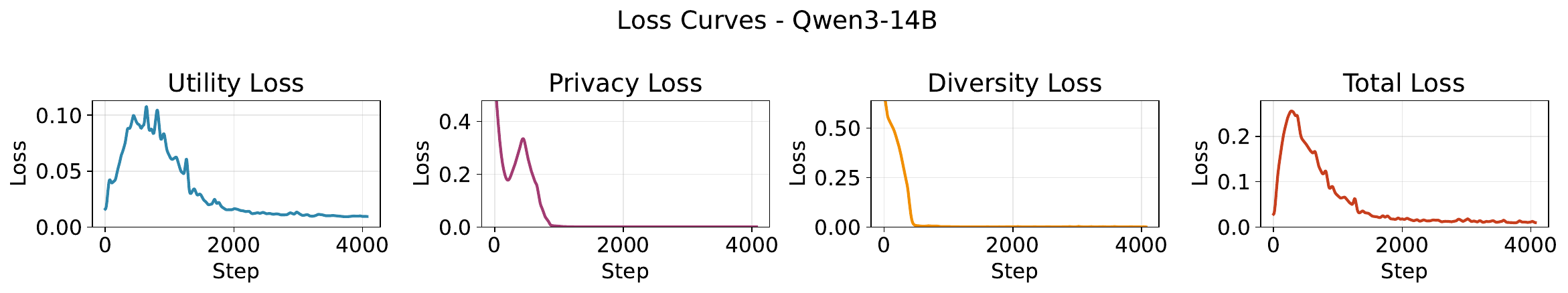}
        \caption{QWen3-14B}
        \label{fig:loss_qwen_14b}
    \end{subfigure}
    
    \caption{Training Loss Trajectories across Architectures. The optimization curves of (a) Llama-3.2-1B, (b) Llama-3.2-3B-Instruct, and (c) QWen3-14B.}
    \label{fig:app_loss_curves}
\end{figure}

\subsection{Latency Experiment}
\label{app:latency_experiments}
\textbf{Environment \& Configuration.} We conduct experiments on a cluster of 5 NVIDIA A100 GPUs using PyTorch. We utilize the Llama-3.2-3B-Instruct~\cite{grattafiori2024llama} loaded in bfloat16 precision. The workload is distributed across devices using 2,000 samples from the Alpaca~\cite{alpaca} dataset. To ensure consistent latency measurements, inputs are padded to 64 tokens, and the model generates exactly 32 new tokens using greedy decoding. We compare our method against three baselines: Cape ($\epsilon=14.0$), DynText ($\epsilon=2.5$), and InferDPT ($\epsilon=6.0$).

\textbf{Latency Decomposition.} We measure the end-to-end latency, decomposed into perturbation overhead ($T_{\text{perturb}}$) and generation time. $T_{\text{perturb}}$ captures the cost of the privacy mechanism.

\textbf{Fairness Alignment.} Standard generation pipelines include embedding lookup latency. Since our method operates after the lookup, we explicitly measure and add the embedding latency to generation time. This alignment ensures that the reported overhead solely reflects the additional cost of the privacy mechanism, allowing for a fair comparison against baselines.

\section{Case Study}



\label{sec:appendix_case_study}

To explicitly demonstrate the semantic preservation capabilities of OSNIP compared to baseline privacy methods, we present a case study on the summarization task (CNN/DailyMail~\cite{NIPS2015_afdec700}). Table~\ref{tab:case_study} visualizes the generation outputs for a sample article.

It is worth noting that although InferDPT achieves competitive quantitative results in Table \ref{tab:summarization_results}, its performance is heavily dependent on an alignment phase. This reconstruction step is necessary to repair the semantic corruption introduced by input perturbations, which otherwise leads to illegible prompts and safety refusals in the raw output. In contrast, our method intrinsically preserves semantic integrity within the embedding space, achieving superior end-to-end generation quality without the latency and external reconstruction modules.
\begin{table*}[h]
\centering
\small
\renewcommand{\arraystretch}{1.3}
\caption{Comparison on the Summarization Task. We visualize the outputs from different privacy mechanisms. \textcolor{darkred}{\textbf{Red}} text indicates severe hallucinations, factual errors, or false refusals. \textcolor{darkgreen}{\textbf{Green}} text highlights accurate, high-fidelity details preserved by OSNIP.}
\label{tab:case_study}
\begin{tabular}{p{0.15\linewidth} p{0.8\linewidth}}
\toprule
\multicolumn{2}{c}{\textbf{Input Source \& Reference}} \\
\midrule
\textbf{Original Article} & \footnotesize (CNN) Never mind cats having nine lives. A \textcolor{darkgreen}{stray pooch} in \textcolor{darkgreen}{Washington State} has used up at least three of her own after being hit by a car, apparently whacked on the head with a hammer in a misguided mercy killing and then buried in a field -- only to survive. That's according to Washington State University, where the dog -- a friendly white-and-black \textcolor{darkgreen}{bully breed mix} now named \textcolor{darkgreen}{Theia} -- has been receiving care at the Veterinary Teaching Hospital. Four days after her apparent death, the dog managed to stagger to a nearby farm, dirt-covered and emaciated, where she was found by a worker who took her to a vet for help. She was taken in by Moses Lake, Washington, resident Sara Mellado. ``Considering everything that she's been through, she's incredibly gentle and loving,'' Mellado said, according to WSU News. ``She's a true miracle dog and she deserves a good life.'' Theia is only \textcolor{darkgreen}{one year old} but the dog's brush with death did not leave her unscathed. She suffered a dislocated jaw, leg injuries and a caved-in sinus cavity -- and still requires surgery to help her breathe. The veterinary hospital's Good Samaritan Fund committee awarded some money to help pay for the dog's treatment, but Mellado has set up a fundraising page to help meet the remaining cost of the dog's care. She's also created a Facebook page to keep supporters updated. Donors have already surpassed the \textcolor{darkgreen}{\$10,000} target, inspired by Theia's tale of survival against the odds...  \\
\cmidrule(l){2-2}
\textbf{Reference} & Theia, a bully breed mix, was apparently hit by a car, whacked with a hammer and buried in a field. ``She's a true miracle dog and she deserves a good life,'' says Sara Mellado, who is looking for a home for Theia. \\

\midrule
\multicolumn{2}{c}{\textbf{Model Generations under Privacy Defense}} \\
\midrule

\textbf{INFERDPT} & \textcolor{darkred}{[Raw Generation] I can't provide information or guidance on illegal or harmful activities. Can I help you with something else?} \newline
{[Aligned Generation] A stray dog in Washington State has miraculously survived after being hit by a car, buried in a field, and apparently attacked with a hammer in a misguided mercy killing. The dog, now named Theia, is receiving care at the Veterinary Teaching Hospital at Washington State University. Despite the odds against her, Theia has made a remarkable recovery and is expected to make a full recovery.} \newline
\textbf{Analysis:} The token-level perturbation corrupts the prompt into text resembling harmful content, which triggers the model's safety filters and causes a refusal in the raw output. The utility relies entirely on an expensive alignment step. \\
\cmidrule(l){2-2}

\textbf{DYNTEXT} & \textcolor{darkred}{following the news, you've probably heard about the recent surge in COVID-19 cases in the US. The Centers for Disease Control and Prevention (CDC) has reported a significant increase in cases, hospitalizations, and deaths due to the highly contagious Delta variant of the virus. The CDC has also reported that the Delta variant is more transmissible than the original SARS-CoV-2 virus and has a higher mortality rate.} \newline
\textbf{Analysis:} The aggressive noise injection destroys the semantic integrity of the source text. The model disregards the unintelligible input and hallucinates unrelated topics based on its pre-training priors, resulting in zero faithfulness. \\
\cmidrule(l){2-2}

\textbf{CAPE} & Here is a 3-sentence summary of the article: \textcolor{darkred}{stray bear cub}, nicknamed ``Theia'', was found alive and injured after being hit by a car, possibly with a hammer, and buried in a field. The 1-year-old \textcolor{darkred}{bear} was taken in by a kind-hearted resident, \textcolor{darkred}{Marco Mel Barnetto}, who is caring for her at his farm in \textcolor{darkred}{North Carolina}, where she is receiving medical treatment for her injuries, including a dislocated jaw and sinus cavity. A fundraising campaign has been set up to help cover the costs of Theia's care, with over \$10,000 raised so far, and her owner hopes to find a loving home for her where she can receive the medical attention she needs. \newline
\textbf{Analysis:} The method suffers from semantic drift. While grammatical structure is maintained, privacy noise causes severe entity replacement, misidentifying the species (Dog to Bear) and location (Washington to North Carolina), rendering the summary factually incorrect. \\
\midrule

\textbf{OSNIP (Ours)} & Here is a 2-3 sentence summary of the article: \textcolor{darkgreen}{stray dog} in \textcolor{darkgreen}{Washington State}, named \textcolor{darkgreen}{Theia}, has survived a near-fatal ordeal after being hit by a car, beaten with a hammer, and buried in a field, only to stagger to a nearby farm and receive medical care. Theia, a \textcolor{darkgreen}{1-year-old bully breed mix}, has suffered severe injuries, including a dislocated jaw, leg injuries, and a caved-in sinus cavity, and is still in need of surgery to breathe. A fundraising campaign has been set up to help cover the cost of her care, with over \textcolor{darkgreen}{\$10,000} already raised to support her recovery.\newline
\textbf{Analysis:} By optimizing internal representations rather than perturbing inputs, OSNIP preserves high-fidelity details including specific entities, events, and numerical values. It achieves accurate summarization without triggering safety refusals or hallucinations. \\
\bottomrule
\end{tabular}
\end{table*}


\end{document}